\documentclass{article}
\usepackage{graphicx} % Required for inserting images
\usepackage{amsmath, amssymb}
\usepackage{booktabs}
\usepackage[numbers,sort&compress]{natbib}
\usepackage{soul} % Strikeout Aid allocation table
\usepackage{ifthen} % used for revision commands
\usepackage[normalem]{ulem} % used for revision commands
\usepackage{amsmath}
\usepackage{color}
\usepackage{amsthm}
\usepackage{subcaption}
\usepackage{makecell}
\usepackage{algorithm}
\usepackage{algpseudocode}
\usepackage{url}
\usepackage[colorlinks=true, linkcolor=blue, citecolor=blue, urlcolor=blue]{hyperref}
\usepackage{multirow}

\theoremstyle{plain}
\newtheorem{theorem}{Theorem}

\NewDocumentCommand{\genericEdit}{+O{}+mm}{%Note: nested optional arguments need to be enclosed in {} -- e.g. \jb[ \jb[{this should be enclosed}]{}]{}.
    {\scriptsize\bf \color{#3}%
        \IfBlankTF{#1}{%
            #2%
        }%
        {%
            \IfBlankTF{#2}{%
                \stkout{#1}%
            }%
            {%
                \sout{#1}\ #2
            }%
        }%
        \normalsize%
    }%
}
\NewDocumentCommand{\jb}{+O{}+m}{\genericEdit[#1]{James: #2}{red}}
\NewDocumentCommand{\mbp}{+O{}+m}{\genericEdit[#1]{Markus: #2}{blue}}
\NewDocumentCommand{\ad}{+O{}+m}{\genericEdit[#1]{Adel: #2}{green}}
\definecolor{fableorange}{rgb}{0.90,0.45,0.00}
\NewDocumentCommand{\fable}{+O{}+m}{\genericEdit[#1]{Fable: #2}{fableorange}}
\definecolor{codexviolet}{rgb}{0.50,0.00,0.80}
\NewDocumentCommand{\codex}{+O{}+m}{\genericEdit[#1]{Codex: #2}{codexviolet}}
\RenewDocumentCommand{\jb}{+O{}+m}{}
\RenewDocumentCommand{\mbp}{+O{}+m}{}
\RenewDocumentCommand{\ad}{+O{}+m}{}
\RenewDocumentCommand{\fable}{+O{}+m}{}
\RenewDocumentCommand{\codex}{+O{}+m}{}

\NewDocumentCommand{\jbl}{+O{}+m}{\genericEdit[#1]{James (leave till after initial submission): #2}{red}}
\RenewDocumentCommand{\jbl}{+O{}+m}{}

\newcommand{\stkout}[1]{\ifmmode\text{\st{\ensuremath{#1}}}\else\sout{#1}\fi}

\newcommand{\EoMlModel}{EO-ML model}
\newcommand{\ExclusionRate}{exclusion rate among eligible recipients }

\newcommand{\tone}{\hat t_{\alpha_1}}
\newcommand{\ttwo}{\hat t_{1-\alpha_2}}

\DeclareMathOperator*{\Thresh}{\textnormal{Th}_{\alpha_1,\alpha_2}}
\DeclareMathOperator*{\Threshstar}{\textnormal{Th}_{\alpha_1,\alpha^*_2}}
\DeclareMathOperator*{\Safe}{\textsc{safe}}

\title{Beyond Point Predictions: Uncertainty-Aware Satellite Poverty Mapping for Public Policy}
\author{
    Markus B.\ Pettersson$^{1,2,3,*}$ \and
    James Bailie$^{1,3}$ \and
    Mohammad Kakooei$^{3,4}$ \and
    Eagon Meng$^{3,5,6}$ \and
    Adel Daoud$^{1,2,3,*}$
}
\date{
    {\small $^{1}$Division of Data Science and AI, Department of Computer Science and Engineering, Chalmers University of Technology and the University of Gothenburg, Gothenburg, Sweden} \\
    {\small $^{2}$Institute for Analytical Sociology, Link\"oping University, Norrk\"oping, Sweden} \\
    {\small $^{3}$AI \& Global Development Lab, Link\"oping University, Norrk\"oping, Sweden} \\
    {\small $^{4}$Geomatics, Department of Environmental and Life Sciences, Karlstad University, Karlstad, Sweden} \\
    {\small $^{5}$Department of Electrical Engineering and Computer Science, Massachusetts Institute of Technology, Cambridge, Massachusetts, USA} \\
    {\small $^{6}$Computer Science and Artificial Intelligence Laboratory, Massachusetts Institute of Technology, Cambridge, Massachusetts, USA} \\
    {\small $^{*}$Corresponding authors: Markus B.\ Pettersson (markus.pettersson@chalmers.se) and Adel Daoud (adel.daoud@liu.se)} \\[6pt]
    {\small \today}
}

\begin{document}

\maketitle

\begin{abstract} %\bfseries \boldmath

\noindent Despite their critical importance for policy and research, high-resolution poverty data remain limited across much of Africa. 
%Although this data can be supplemented by 
Machine learning (ML) with earth observation (EO) imagery has recently emerged as a way to supplement these data by predicting (i.e., estimating) poverty where it has not been directly measured.
%Estimates derived from earth observation (EO) imagery using machine learning (ML) have recently emerged as a supplement to these data, providing granular, continent-wide poverty maps for the first time. 
%While this provides granular, continent-wide poverty maps for the first time,
%While granular, continent-wide maps of these predictions can be produced, 
%Yet to be useful, users need assurances of how potential errors in these predictions could affect their results and decisions.
%Yet for these predictions to have value, users need assurances that their decisions are not accidentally the result of errors in these predictions.
%Yet for these predictions to be used reliably, users need assurances that they will not be misled by the errors in these predictions. 
%Yet to be used reliable, these predictions need to be accompanied with guarantees that users will not be misled by the errors in these predictions.
Yet to be used reliably, decision-makers and analysts need assurances that they will not be misled by the errors in these predictions. 
%potential errors in these will not affect their 
%how potential errors will impact their decisions.
%Yet when predictions, affect high-stakes decisions, 
%It is tempting to use these maps for policy, yet when...\jb{need a linking sentence here} While these estimates/maps explain a large fraction of variance of poverty have demonstrated remarkable performance (accuracy is not quite right, `explanatory power' is the best?, maybe `strong model fit'), Yet when predictions \jb{note use of estimates in previous sentence} affect high-stakes decisions, point accuracy alone is insufficient: decision-makers also need reliable uncertainty estimates \jb{not uncertainty estimates (this is not what we are providing with SAFE), but just something about reliability/guarantees.}. 
To meet this need, we develop an uncertainty-aware EO-ML method for poverty mapping based on simultaneous quantile regression and a novel form of conformal prediction. Using a spatiotemporal transformer trained on sequences of Landsat and nighttime-light images, we produce prediction intervals for neighborhood-level International Wealth Index estimates across Africa which are statistically guaranteed to achieve their desired coverage rates. While our method's point-prediction performance matches the state of the art, its prediction intervals are wider than might be expected given its high $R^2$ of $0.75$.
%we show that local uncertainty is much larger than aggregate metrics such as $R^2$ might suggest, yielding wide prediction intervals; because interval width is determined by the residual error of the underlying predictions, other EO-ML models of similar accuracy would likely face comparable uncertainty, suggesting an inherent limitation in existing methods, and corroborating with other literature that existing model's point predictions cannot always be relied upon for high-stakes decisions, despite their remarkable performance/explanatory power. 
However, other models %\jb{EO-ML is implicit, we don't need to repeat this again and use this term twice in a single sentence.}
of similar accuracy likely 
%because interval widths are fundamentally determined by the error of the underlying predictions, other models of similar accuracy %\jb{all other models have similar or worse accuracy, so we can drop this} 
suffer from comparable uncertainty, pointing to an inherent limitation: %in existing methods 
%and corroborating recent findings that, 
Even with its remarkably high explanatory power, EO-ML cannot naively be relied upon 
%for policy-making (what it was to begin with)
for policy-making, such as %/like/including 
when designing poverty-targeting programs.
%to inform policy like poverty-targeting programs.
%when designing public policy, including poverty targeting
%when making policy, such as poverty-targeting programs. 
%for policy-making, like poverty-targeting (Adel's initial edit)
To handle this challenge, we develop a procedure to efficiently allocate aid 
using both ground-truth surveys and model predictions while provably ensuring the risk of excluding eligible neighborhoods remains below a prespecified level. In simulations, this approach delivers substantially more aid per eligible recipient than other strategies,
%which delivers substantially more aid per eligible recipient compared to survey-only strategies while 
thereby demonstrating that EO-ML can indeed be a reliable supplement to traditional data sources---as long as methods are tailored to the problem at hand. %Under budget constraint, a policymaker can thus give more aid to neighborhoods in need rather than using their budget for surveying.\jb{I feel like this is repetitive, we just said ``this approach deliversy substantially more aid per eligible recipient'', now we are essentially saying that again.}
%\jb{This is already ~270 words, which is 20 words too long. But we also said we wanted to talk about ``Blumenstock says either survey or ML. We show how both of these can be combined in a disciplined way''.}
Taken together, this work establishes how survey estimates and EO-ML predictions can be combined to achieve efficiency gains beyond what is possible with either data source alone, 
%in a way that does not compromise 
%and 
without compromising 
the reliability of the resulting decisions.
% performs better than while also guaranteeing
% is guaranteed to reliable 
% , which has statistical guarantees 

% reducing unnecessary surveys and improving aid targeting.
% This demonstrates that, when targeting a specific use case, 
% Thus, 

% Nevertheless, by tailoring
% and combining with traditional 
% can still be leveraged to reliable supplement traditional surveying, 
% improving on survey-based methods
% achieving best of both worlds/higher efficiency than either on their own.

% we demonstrate that reliable EO-ML poverty mapping can still remain useful nevertheless remains useful/reliable in applications where uncertainty is directly incorporated into the decision rule
% when methods are tailored directly to the decision in question. In an aid-allocation application, our conformalized thresholding procedure caps the rate at which eligible communities are wrongly excluded while outperforming standard benchmarks, reducing unnecessary surveys and improving the allocation of transfers. Rather than requiring a choice between costly surveys and uncertain predictions, this framework combines them: survey resources are spent only where predictions are too uncertain to act on.

\end{abstract}

\section*{Significance Statement}

%Satellite imagery combined with machine learning now produces detailed poverty maps for regions where household surveys are scarce, and such maps increasingly inform who receives assistance \mbp{Add reference. Does AD have an example?} \jb{what about leaving this more general: e.g., ``increasingly inform policy and research'' instead of ``increasingly inform who receives assistance''. I don't see why we should just focus on ``who receives assistance''; I think ``who receives assistance'' is a little unclear (who out of what group? what assistance?); and making it more general will let us add all the standard references, e.g., Yeh et al., or the German group.}. We show that these maps, although accurate on average, carry local uncertainty large enough to change high-stakes decisions about individual communities. We develop methods that convert this uncertainty into explicit statistical guarantees: they certify when a neighborhood can be confidently classified as poor, abstain when the evidence is insufficient, and direct follow-up surveys only where they are needed \mbp{Abstain/follow-up survey is the same in SAFE}. In simulations of budget-constrained aid allocation calibrated on African survey data, uncertainty-aware targeting delivers substantially more aid per eligible recipient than survey-only strategies while capping the risk of excluding poor communities. Reliable uncertainty quantification thus turns a promising prediction technology into accountable policy infrastructure.
By analyzing satellite images, machine learning models are now producing detailed poverty maps for regions where household surveys are scarce. 
%Detailed poverty maps for regions where household surveys are scarce are now being created from satellite images by machine learning. 
%Satellite imagery, combined with machine learning, now produces detailed poverty maps for regions where household surveys are scarce, and such maps increasingly inform policy and research. 
%We show that, while 
To a large degree, these models explain the differences %variation \jb{maybe variation/variance is too technical}
in poverty levels across Africa, but
there is still substantial error in many of their estimates. 
%any one of their estimates can still be very uncertain.
%their estimates remain uncertain.
%This makes decision-making based on these maps 
Thus, while the poverty maps they produce are informative, using these maps without accounting for the uncertainty introduced by the models' errors can lead to incorrect decisions and unreliable findings.  
Because machine learning poverty estimates are increasingly informing policy and research, 
it is critical to have uncertainty-aware methods that provide explicit guarantees of the reliability of downstream analysis and decision-making.
%it is critical to have methods for incorporating 
%maps' uncertainty in downstream analysis and decision-making
%and providing explicit guarantees of their reliability
This article presents some of the first such methods tailored to poverty maps produced by machine learning with satellite images---a task which has its own unique challenges.

\section*{Introduction}

% \mbp{This became sort of a half-intro, half-abstract, but I think it conveys the framing.}

%\ad{James, can we make the story more cohesive and better integrated. The survey vs ML trade off is great way of working with this.}
Reliable and fine-grained information on poverty is a prerequisite for targeting scarce development resources, evaluating aid programs, and monitoring progress toward policy goals. Yet in much of Africa, data on living standards are drawn primarily from household surveys, which are costly to field, slow to update, and geographically incomplete \cite{groves_total_2010,lavrakas_encyclopedia_2008,dhs}. %Decision-makers and researchers have not had access to the data they need, making it 
%Thus, it remains 
%hard to formulate public policy interventions---such as deciding where to build the next road, school, or hospital---that benefit the neighborhoods most in need\jb{@Adel, is there a citation we can add for this?}. %JB: Note Adel your initial writing said: to decide where to build the next X, Y, or similar public policy interventions. This reads as: you want to build the next public policy intervention.
%Adel's old writing: thus, it remains hard for policymakers to formulate decisions on where to build the next road, school, hospital, or similar public policy intervention to benefit neighborhoods in need \cite{groves_total_2010,lavrakas_encyclopedia_2008,dhs}. 
%This creates a challenge:
%\jb{this sounds like an LLM -- is mismatch the right word? Is `persistent' the right adjective?}
Thus, the decisions that most benefit from high-resolution, up-to-date poverty data are frequently made in precisely the settings where such data are hardest to come by. %JB: reverting to this old sentence, because otherwise the flow of the paragraph doesn't fit with the next paragraph (e.g., the "complementary path")

%\ad{Markus point: Blumenstock says either survey or ML. We show how both of these can be used in a disciplined way. Now grounded as the framing for the paper.}

%To alleviate the scarcity of data on living conditions, r
Recent work suggests a complementary path that draws on the growing archive of earth observation (EO) imagery. By training machine learning (ML) models to predict asset wealth and related proxies from satellite data, EO-ML systems can generate neighborhood-level poverty estimates over large areas using limited ground-truth supervision \cite{Jean2016,Yeh2020,chi_microestimates_2022,pettersson2023,Wang2024}. In many settings, these approaches achieve strong performance, with reported $R^2$ values between $0.56$ and $0.76$, raising the prospect of high-resolution poverty mapping at scale \cite{pettersson2023}.
%In many settings, these approaches achieve strong performance, with reported $R^2$ values exceeding $0.75$ \jb{Barely... And I'm not sure we want to emphasise this, because then it makes our model (which has $R^2 = 0.75$) look bad}, raising the prospect of high-resolution poverty mapping at scale \cite{pettersson2023}.

However, strong average performance does not ensure that EO-ML estimates are useful for decision-making. 
%An $R^2$ summarizes average fit, but it does not reveal how errors vary across space or for the specific locations where decisions are made. 
%\jb{This needs to be changed -- $R^2$ is relative accuracy metric} \ad{I suggest for this paper to leave this R2 as it is.} 
$R^2$ is the fraction of variation observed in the data that is explained by a model. It is a measure of relative accuracy---relative to the baseline of predicting the average---rather than a measure of absolute accuracy. % it does not reveal how accurate 
Thus, a model can achieve $R^2>0.75$ while still %making large, systematic errors for certain regions, or while 
producing residuals that are too large for threshold-based aid targeting or other policy-making.
%A model can achieve $R^2>0.75$ while still making large, systematic errors for certain regions, or while producing wide \jb{wide doesn't make sense here} residuals that are unacceptable for threshold-based targeting. 
Even if EO-ML point predictions were more accurate than what is currently possible, they would still not be sufficient for decision-making and research. This is because 
even small errors in a model's predictions can matter: if a village just below the poverty line is predicted to be above the line, they will miss out on receiving aid from a program that uses EO-ML poverty estimates to assess eligibility. % This matters most near policy cutoffs: two villages with identical point predictions can imply very different risks of being below the poverty line depending on their prediction intervals, which changes whether one should survey, target, or defer. 
%Prediction intervals account for model errors %make this decision-relevant uncertainty explicit 
%by quantifying how wrong a prediction could plausibly be, rather than assuming that a point estimate is sufficient. 
%Uncertainty quantification 
%Addressing model error is therefore essential for turning EO-ML predictions into evidence that can be used responsibly. 
%EO-ML point predictions on their own are not a sufficient basis for policy and research. 
%What is required, before this new source of data can be used reliably, 
Thus, before this new source of data can be used reliably, it is essential to address the error introduced by EO-ML 
by quantifying the uncertainty in its predictions---that is, by quantifying how wrong each prediction could plausibly be.
%Addressing model error is therefore essential for ensuring that EO-ML predictions can be used as reliable evidence. 
%This requires uncertainty quantification---that is, quantification of how wrong each prediction could plausibly be.
%More specifically, 
%To do this, we need to quantify how 
%To do this, we need uncertainty quantification: 
%To do this, What is required is to quantify how wrong each prediction could plausibly be. 
%uncertainty quantification
%Quantifying how wrong a model could plausibly be is therefore essential for turning EO-ML predictions into evidence that can be used responsibly in such a sensitive domain.

%\jb{Define the term EO-SQR model somewhere in the introduction}
In this work, we develop an EO-ML approach that yields statistically valid uncertainty quantification. First, we train a model with simultaneous quantile regression \citep{sqr2018} to produce prediction intervals for an asset-based proxy of poverty. Using conformal prediction \citep{vovk2005, cqr_romano}, we then transform each prediction interval so that it is guaranteed to cover the true value of the proxy at the prespecified coverage rate under mild assumptions. %\jb{yielding twice. Maybe split this sentence and explain what coverage guarantees mean in more detail and how that connects to reliability} intervals with coverage guarantees under mild assumptions.
We show that, despite matching state-of-the-art EO-ML poverty models in terms of $R^2$ and accuracy, the resulting prediction intervals are typically wide, with the median interval covering two fifths of the observed values. 
These large interval widths are ultimately driven by the residual errors in the underlying point predictions: the larger those errors, the wider the intervals must be. 
In general, prediction intervals must have width inversely related to the model's accuracy, regardless of the model in question.
%regardless of which EO-ML model is used, any calibrated prediction intervals must have width matching 
%inversely related to the EO-ML model's performance
%the residuals of the 
%Any calibrated interval procedure must expand wherever residuals are large or heterogeneous, and interval width is ultimately driven by the residual error in the underlying point predictions: the larger that error, the wider the intervals needed to maintain coverage. 
Thus, because our model's accuracy is on par with prior EO-ML results \cite{pettersson2023, marty2024global, Wang2024, zheng2025dynamic, Yeh2020, sherman2026global, Jean2016, chi_microestimates_2022}, similarly wide intervals would arise with those models too. This finding suggests that previous work, which focused on point metrics and $R^2$ performance, may understate the uncertainty remaining in their predictions, and it clarifies when and where EO-derived estimates can be trusted.

%Wide intervals do not imply that EO-ML poverty mapping is useless \jb{I think this framing is not quite right. In some sense they are useless, although I don't think we want to say that. Rather, wide intervals mean that we need to be smarter. For example, we need to tailor our UQ directly to the decision/policy questions we are interested in. We show that this is possible with an aid-allocation example.}; rather, they motivate using these predictions in decision frameworks that explicitly account for uncertainty. 

The wide intervals we observed also motivate a tailored approach to designing EO-ML systems. 
%In particular, this finding motivates a tailored approach to designing EO-ML system.
Clearly, in order to build an EO-ML system which is reliable---i.e., which provides uncertainty guarantees---while still being efficient---e.g., not having wide prediction intervals---we must make the most out of the information available by tailoring to the downstream use in question.
To provide a concrete example of this approach, we consider the problem of determining which areas fall below a given poverty line. Using a novel form of conformal prediction, we develop a method that controls the rate of misclassification according to the risk tolerance of the policy-maker. 
We then extend this method to address the question of how to allocate aid to communities that need it the most.
%\jb{In particular, to build reliable and efficient EO-ML systems, they must be tailored to the downstream use in question. decision, in order to get the most out of the power. (Don't optimize generally, do something specific/tailored)}
%Wide intervals mean that uncertainty quantification must be tailored to the decision it serves\jb{thereby getting the most out of the data tailored to the particular question in mind}; they motivate using these predictions in decision frameworks that explicitly account for uncertainty. 
%To illustrate this, \jb{talk about conformalized thresholding first}we present an aid-allocation example\jb{not example---it's a new method} that targets villages below a poverty threshold while providing statistical guarantees on the false negative rate (FNR)\jb{Edit out mention FNR--on the rate of misclassifying such villages}. 
In this setting, by spending survey resources only where the EO-ML model is genuinely uncertain, our method can support reliable targeting of aid at substantially lower cost than traditional survey-only approaches. At the same time, it guarantees that the fraction of eligible neighborhoods not receiving aid is capped below a preset threshold. %making the remaining uncertainty transparent rather than hidden. 
Recent evidence suggests that the cost of identifying eligible recipients rivals the size of the aggregate poverty gap (the total amount of money needed to lift everyone above the poverty line), 
%costs of actually lifting them out of poverty, 
strengthening the case for targeting procedures like ours that leverage EO-ML to reduce such costs
%spend survey resources only where the model is genuinely uncertain 
\citep{sahoo2025would}.

Overall, this work addresses what has become an either-or view of poverty measurement: either field costly surveys to directly collect data, or accept potentially unreliable EO-ML model predictions \citep{blumenstock_estimating_2018, aikenWhenShouldBig2025}. 
Breaking this dichotomy, our results show that EO-ML methods can reliably and efficiently be used in conjunction with surveys. %using the predictions that are certain, and surveying when not.
%Aggregate metrics such as $R^2$ alone cannot settle this choice, because any individual prediction can still be far off. Our work shows how EO-ML model can be used in conjunction with surveying---but for sampling only neighborhoods with large uncertainty.%\jb{Note that ``sampling only neighborhoods with large uncertainty'' is exactly what the next sentence says, in clearer terms} 
%\jb{this is too strong}.
As every prediction carries a calibrated statement of its own uncertainty, predictions and surveys become complements: model output is acted on where it is decisive, and survey resources are reserved for the places where it is not.
%\ad{Markus point: Blumenstock says either survey or ML. We show how both of these can be used in a disciplined way. Now grounded in the literature.}

%\jb{One way to write out this argument: ``Despite their remarkably high accuracy on-aggregate, any individual prediction can still be wildly incorrect. This lack of reliability has led some researchers to conclude ML predictions are only useful in limited scenarios, with direct measurements required the rest of the time (cite Blumenstock). Our work demonstrates how to intelligently augment direct measurements with EO-ML predictions, leading to much better tradeoffs than current all-or-nothing approaches.'' How does that sound? Feel free to edit.}

\section{Methods}
\label{sec:method_overview}

%This section describes our approach for a general audience\jb{or ``our approach in general terms''}\jb{I'm not sure we want to say ``a general audience'' since the PNAS is still an academic journal; we are not writing for say The New York Times. On further reflection, I think this sentence is just redundant.}; all formal definitions, algorithms and proofs are given in Appendix~\ref{app:detailed_methods} \jb{The second half of this sentence has been moved to the end of Methods}.
For a more detailed description of our methods, including all formal definitions, algorithms and theory, see Appendix~\ref{app:detailed_methods}.

\subsection{Measuring poverty}

Our outcome is household asset wealth, as measured by the International Wealth Index (IWI), which ranges from 0 (owning none of the tracked assets) to 100 (owning all of the tracked assets, including a TV, car, flush toilet, and three or more rooms) \citep{Smits2015}. We source IWI data from the Demographic and Health Surveys (DHS), a survey programme funded by the United States Agency for International Development (USAID) that interviews households in small geographic clusters (roughly a village in rural areas or a neighborhood in urban ones) \citep{dhs, Burgert2013}. We average household IWI scores within each cluster and treat these neighborhoods as the units for which we make predictions and decisions. In total, our data cover roughly 70{,}000 neighborhoods across 38 African countries between approximately 1990 and 2020 \citep{dhs_harmonization}.

%Old measuring poverty section:
% \subsection{Measuring asset wealth}

% Our outcome of interest is asset wealth measured by the International Wealth Index (IWI), a survey-based welfare proxy that places households on a comparable scale from 0 to 100 \citep{Smits2015}. We use IWI derived from Demographic and Health Surveys (DHS), which provide nationally representative household microdata across many African countries and are widely used in poverty measurement research \cite{dhs}. Following standard practice, we aggregate household-level values to the DHS cluster level, roughly equal to a village in rural areas and a neighborhood in urban areas, and treat each cluster as the unit of prediction and decision-making \citep{Burgert2013}. From here on, we refer to these clusters as neighborhoods.

% In total, we use IWI data for 69,949 DHS neighborhoods (about 1.5 million households) from 38 African countries, spanning 164 DHS surveys fielded from the early 1990s through to the early 2020s \citep{dhs_harmonization}. For confidentiality, the DHS randomly displaces cluster coordinates (up to 2~km for urban clusters and 5~km for rural clusters, with 1\% of rural clusters displaced up to 10~km), a perturbation that our $6.72$~km input footprint of our satellite imagery largely absorbs \citep{Burgert2013}.

\subsection{Predicting wealth from satellite images}
For each neighborhood, we assemble a time series of multispectral Landsat \citep{landsat45, landsat7, landsat8} and nighttime-light \citep{Li2020} satellite images covering a $6.72 \times 6.72$~km footprint centered on its location. We train a transformer-based simultaneous-quantile-regression model, which we call the EO-SQR model, to learn the relationship between these images and the neighborhood's IWI. Unlike conventional regression models, which return a single best estimate, simultaneous quantile regression takes a given quantile level $q$ as input and predicts the value below which the true neighborhood-level IWI is expected to fall with probability $q$, conditional on the neighborhood's satellite imagery \citep{sqr2018}. We obtain a point estimate for each neighborhood by predicting the 50th percentile, which is equivalent to the prediction produced by a standard regression model. To construct a prediction interval with a target coverage level of $\alpha = 90\%$, we predict the 5th and 95th percentiles and use them as the lower and upper interval bounds, respectively (Figure~\ref{fig:ct_panel}A).

\subsection{Calibrating prediction intervals with conformal prediction}

The naive 90\% intervals produced by the transformer model are not calibrated, as the true IWI values are not guaranteed to lie in these intervals 90\% of the time. This can be amended using conformalized quantile regression (CQR) \citep{cqr_romano}. The idea is simple: set aside a labeled dataset that the EO-SQR model has never seen during training (a \emph{calibration set}), evaluate how much the prediction intervals miss the true labels on this data, sort the resulting residuals in ascending order, and select the residual corresponding to the desired coverage level. At deployment, this residual is then used as a uniform correction term, widening every prediction interval by the calibration-derived error margin to achieve the target coverage guarantee. Under a mild assumption (that calibration and new neighborhoods are exchangeable), the corrected intervals are mathematically guaranteed to contain the true wealth level at the target rate $\alpha$, no matter how complex the underlying model is. The guarantee is \emph{marginal}: it holds on average across neighborhoods, not for each named neighborhood individually.

\begin{figure}[!htbp]
    \centering
    \includegraphics[width=\linewidth]{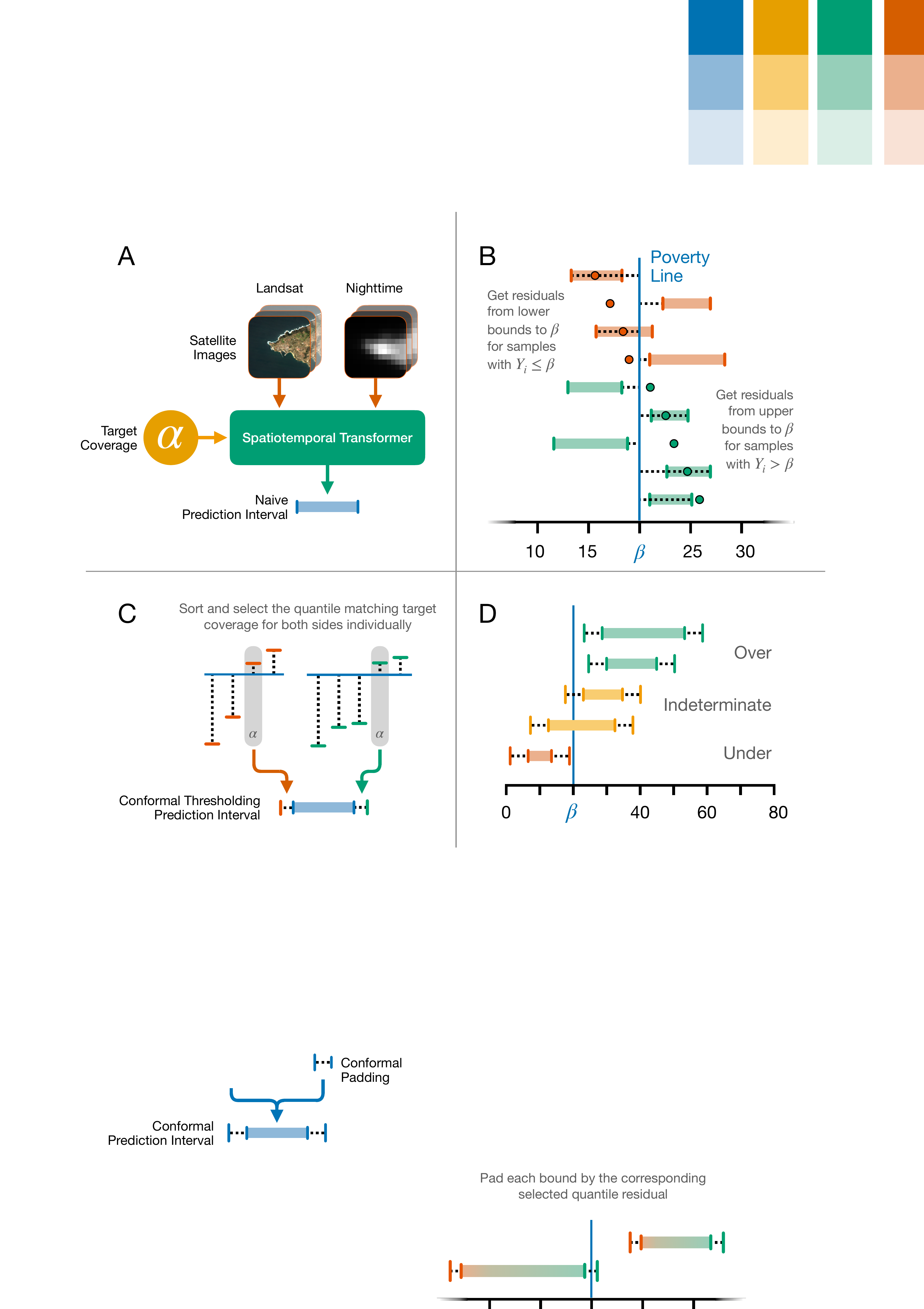}
    \caption{\textbf{Pipeline for Conformalized Thresholding (CT)}. \jbl{Maybe remove the 80 in the x-axis of (D) and just have the x-axis fade off on the right like in (B)} \textbf{(A)} The simultaneous quantile-regression model maps Landsat and nighttime-light imagery to an initial prediction interval. These model-based intervals adapt to the input location but have no finite-sample coverage guarantee before calibration. \textbf{(B)} Using a held-out, labeled calibration set, CT measures one-sided errors relative to the policy threshold $\beta$ separately for neighborhoods with $Y_i\leq\beta$ and for neighborhoods with $Y_i>\beta$. \textbf{(C)} From each set of errors, CT selects the conformal quantile associated with the target coverage rate $\alpha$ and uses it to adjust the corresponding interval bound, producing a calibrated thresholding interval for a new neighborhood. \textbf{(D)} If the calibrated interval lies entirely above $\beta$, the neighborhood is classified as ``Above''; if it lies entirely below $\beta$, it is classified as ``Below''; and if it overlaps $\beta$, CT abstains and returns ``Indeterminate.'' Under exchangeability, both the probability of misclassifying a neighborhood with $Y\leq\beta$ as ``Above'' and the probability of misclassifying a neighborhood with $Y>\beta$ as ``Below'' are at most $1-\alpha$.}
    \label{fig:ct_panel}
\end{figure}

\subsection{Conformalized thresholding for poverty classification}

%\begin{figure}[!htbp]
%    \centering
%    \includegraphics[width=0.5\linewidth]{images/thresholding_task.png}
%    \caption{Prediction intervals turn threshold classification into a three-way decision. Each bar is one neighborhood's calibrated wealth interval on the 0--100 IWI scale, and the dashed line is the poverty threshold. If the whole interval lies above the threshold (green), the neighborhood is confidently classified as ``Over''; if the whole interval lies below (orange), as ``Under''; if the interval straddles the threshold (yellow), the data do not support either call and the method abstains, returning ``Indeterminate.'' \jb{I suggest combining Fig 1--3 into a single figure, so then we have a single ``methods'' figure. It would be great if we had a single figure such that a reader could look just at this figure (having only read the abstract) and understand our method. At the very least, we should combine Fig 1 and 2, since they are both narrow.} \mbp{Call it $\beta$}}
%    \label{fig:thresholding_task}
%\end{figure}
%\ad{James, change title}

Many policy questions reduce to a threshold: is this neighborhood below the poverty line $\beta$ or not? Prediction intervals give a natural decision rule, illustrated in Figure~\ref{fig:ct_panel}D: classify a neighborhood only when its entire interval falls on one side of the threshold, and otherwise abstain and say ``indeterminate.'' Our contribution, Conformalized Thresholding (CT), calibrates these prediction intervals so that both types of errors are controlled at the pre-set level $1-\alpha$: 
%\jb{Change this to the same $\alpha$, then say in the next sentence (or something) that we can have two different $\alpha$'s, so that we can have varying the error rates for each group, because making an error for the poor can matter more/less than making an error for the rich} 
among neighborhoods that are truly below the poverty line, at most a fraction $1-\alpha$ are wrongly cleared, and among neighborhoods above the line, at most a fraction $1-\alpha$ are wrongly flagged. Moreover, because one type of error can be more costly than the other, our method allows a practitioner to set the two error rate levels separately, ensuring an $\alpha_1$-coverage guarantee for neighborhoods below the poverty line, and an $\alpha_2$-coverage guarantee for those above. We obtain these two guarantees by calibrating the two sides of the decision separately on the corresponding groups of calibration neighborhoods, similar to class-conditional conformal prediction \citep{angelopoulos2023conformal}. %\jb{Not exactly the same as class-conditional in Angelopoulos and Bates, since we have a different objective}
Subject to these error caps, a good method should abstain as rarely as possible. Figure~\ref{fig:ct_panel} walks through the calibration mechanics: residuals are collected separately for the two groups of calibration neighborhoods, a group-specific quantile sets each correction, and new intervals are padded so that threshold decisions err on the safe side at the chosen rates.

\subsection{From classification to aid allocation: SAFE}

To demonstrate the practical utility of these methods for policymaking, we simulate an aid-allocation problem. Given a fixed budget and an eligibility threshold $\beta$, we seek to determine which neighborhoods in a country are eligible for cash transfers. We have access to our EO-SQR model, a small survey that serves as a calibration set, and the option to survey any neighborhood at a predetermined cost to establish its true eligibility. The aim is to maximize the fraction of the budget spent on cash transfers to eligible neighborhoods while respecting a predetermined upper bound $1-\alpha_1$ on the \ExclusionRate.

To this end, we extend CT into a procedure called Screen And Follow up with Error control (SAFE). SAFE first screens out neighborhoods that are confidently above the eligibility threshold $\beta$, with the risk of wrongly excluding a neighborhood capped at the predetermined exclusion rate $1-\alpha_1$, as in CT. Among the neighborhoods that remain, those whose CT prediction intervals lie entirely below $\beta$ receive aid directly, whereas indeterminate neighborhoods are surveyed. This creates a trade-off: providing aid directly risks allocating funds to ineligible neighborhoods, whereas surveying diverts resources that could otherwise fund transfers. The coverage level $\alpha_2$ governs this trade-off. 

SAFE selects $\alpha_2$ value by iteratively evaluating all available values from 0, at which all remaining neighborhoods receive aid, to 1, at which all are surveyed. It then chooses the value that maximizes a transparent utility measure, aid dollars per truly eligible recipient, estimated from the calibration data.

\subsection{Evaluation design}

We evaluate our methods using five-fold cross-validation. In each cross-validation round, three folds are used to train the EO-SQR model, one fold is used to estimate the conformal corrections, and the remaining fold is used exclusively for testing. Conformal corrections are estimated separately in each cross-validation round using only its designated calibration fold and are then applied to that round's test fold. Unless otherwise stated, the folds are constructed using an out-of-area splitting procedure that spatially separates neighborhoods and prevents overlapping satellite footprints from appearing across partitions \citep{pettersson2023}. %Details of this procedure are provided in Appendix~\ref{sec:ooa_split}.

By default, we compute evaluation metrics from the pooled out-of-fold test predictions. For the SAFE evaluation, we instead partition the data by country to represent deployment in countries excluded from model training. The countries assigned to the training, calibration, and evaluation partitions for each fold are also listed in Supplementary Table~\ref{tab:fold_countries} (in Appendix~\ref{sec:ooa_split}).

\section{Results}

\subsection{Conformalized quantile-regression estimates}

\begin{figure}[!htbp]
    \centering
    \includegraphics[width=\linewidth]{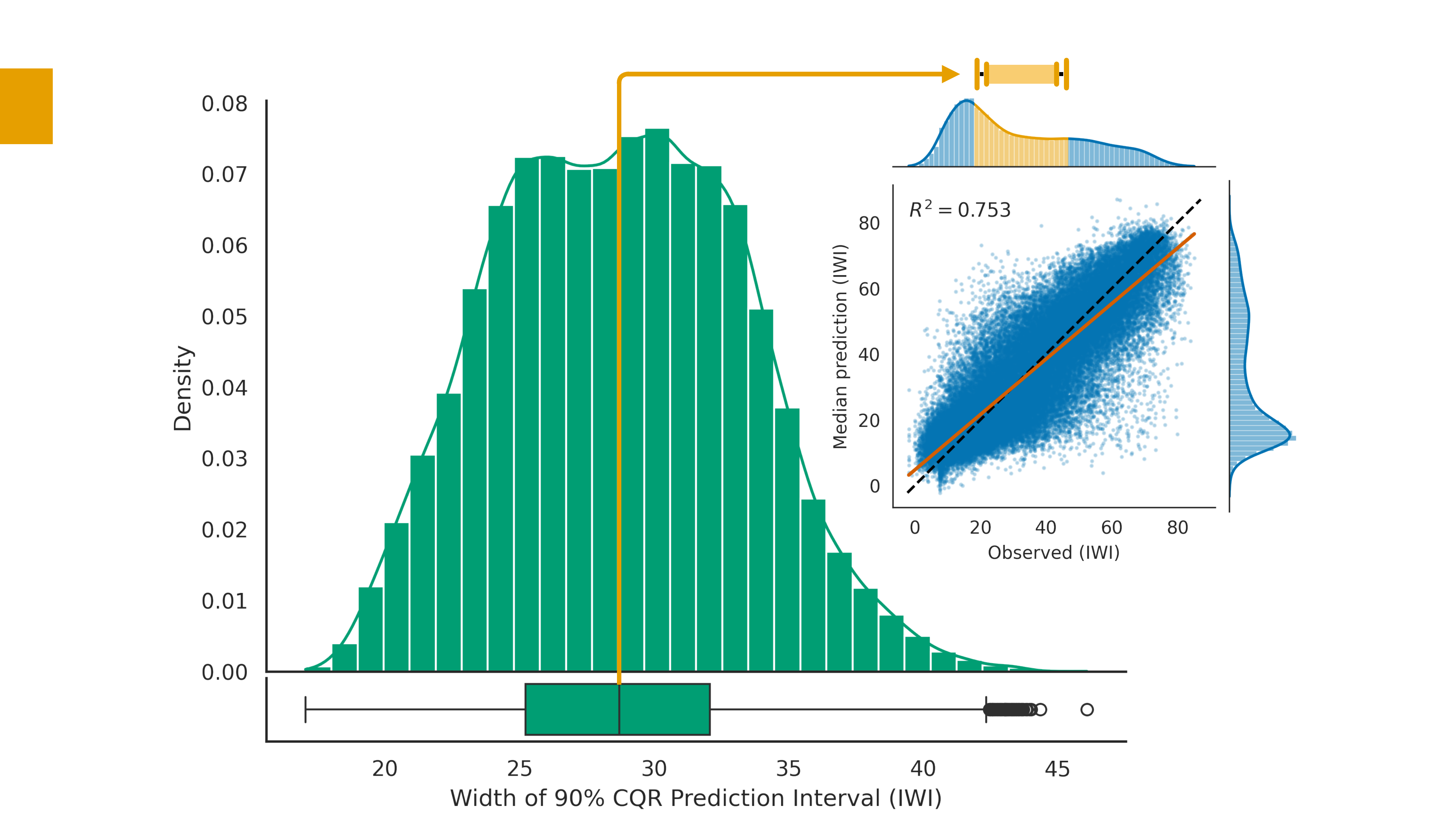}
    \caption{\textbf{Strong aggregate predictive performance coexists with substantial local uncertainty.} The main panel shows the distribution of 90\% CQR prediction-interval widths across held-out DHS neighborhoods. Although the inset shows strong aggregate agreement between observed IWI and the median predictions ($R^2=0.753$), matching reported values in prior works, the median interval is still 28.7 IWI points wide (orange). When centered on the mean observed IWI, an interval of this width contains 39.7\% of the observations in the full dataset, illustrating how limited the model's local precision remains despite its strong aggregate performance.}
    \label{fig:hist_and_cal}
\end{figure}

Much like prior EO-ML work on asset-wealth prediction, our EO-SQR model achieves strong point-prediction accuracy. Using the model's median predictions, we obtain $R^2=0.75$ on held-out survey locations, indicating that the model explains a substantial share of out-of-sample variation in asset wealth while still leaving meaningful residual error (Figure~\ref{fig:hist_and_cal}). Although direct comparisons with earlier studies are difficult because of differences in data, assumptions, and evaluation setups, this performance appears to be matching the state of the art in the existing EO-ML literature. Across eight recent papers on the topic, the reported $R^2$ values range from $0.56$ to $0.76$, placing our model near the top of the reported performance range (see Appendix~\ref{sec:previous_works} for further discussion). 
Any calibrated interval procedure must expand where residuals are large or heterogeneous, and prediction interval width is ultimately driven by the residual error of the underlying point predictions: the larger that error, the wider the intervals needed to maintain coverage. Because our model performance match prior EO-ML results, similarly wide intervals would likely arise for those models if they were required to achieve the same coverage.
%Any calibrated interval procedure must expand where residuals are large or heterogeneous, and \jb{prediction?} interval width is ultimately constrained \jb{Is constrained the right word here? Constrained sounds like the width is being limited/upper bounded by the error, but actually it is the opposite: the error makes the width larger.} by the irreducible error of the underlying point predictions. Because our point model matches prior EO-ML results, similarly wide intervals would likely arise for those models if they were required to achieve the same coverage. %\jb{If reviewer challenges us on this point, we can run conformal on another model and show that it gets wide intervals. We can still get conformal prediction intervals even if the model only produces point estimates, right?}

To quantify model uncertainty, we use conformalized quantile regression to construct 90\% prediction intervals for each neighborhood-level estimate and assess their performance on held-out DHS neighborhoods. %As shown in Figure~\ref{fig:qq_plot}, the \jb{moving this to the end of the sentence since this figure is in the appendix}
The naive intervals produced by our \EoMlModel already come close to the target coverage, but they still require conformal calibration to satisfy this requirement (see Supplementary Figure~\ref{fig:qq_plot}). Figure~\ref{fig:hist_and_cal} shows the distribution of interval widths. Although point predictions are accurate on average, interval widths remain large for most locations relative to the observed wealth range. To illustrate the practical magnitude of this uncertainty, the median interval width (28.7), when centered on the mean IWI score across all surveys, yields a span from 18.1 to 46.8, encompassing households with basic floor materials and few durable assets to households with televisions and refrigerators. As a result, even in the average case, these estimates are likely too uncertain for many operational tasks that require confident local-level decisions, such as threshold-based targeting or rank-based allocation. This highlights the limitations of relying on EO-ML estimates without careful consideration of uncertainty, while also motivating a broader view of what these models can still deliver at continental scale.

To demonstrate the scalability of this EO-ML approach, we created a 6.72 km/pixel raster covering all populated places on the African continent with predictions of IWI levels for the year 2021. The resulting gridded asset-wealth map and CQR prediction intervals (Figure~\ref{fig:iwi_and_ci_maps}) are made available through the Harvard Dataverse. More details on the map creation process are provided in Appendix~\ref{sec:map_creation}.
%\mbp{AD, could you clarify this?} \mbp{Mention that it doesn't simply correlate with wealth.}\fable{To Markus: rewritten as a descriptive association in response to your two queries. If you want the stronger statement (width tracks variability, not wealth), it needs a reported correlation or a stratified check against the full-resolution raster; Adel can add that once your regenerated maps land.}
%Interval width thus tracks outcome variability rather than survey coverage alone --- the model is most certain precisely where nearly everyone is poor, and least certain where wealth is mixed. \mbp{AD, could you clarify this?} \mbp{Mention that it doesn't simply correlate with wealth.}

%PAIP: Purpose, % Location and summary: "from this point of view", Highlighst: "when FPR increases, ...", Conclusion and implications: "Thus"

\begin{figure}[!htbp]
    \centering
    \includegraphics[width=1\linewidth]{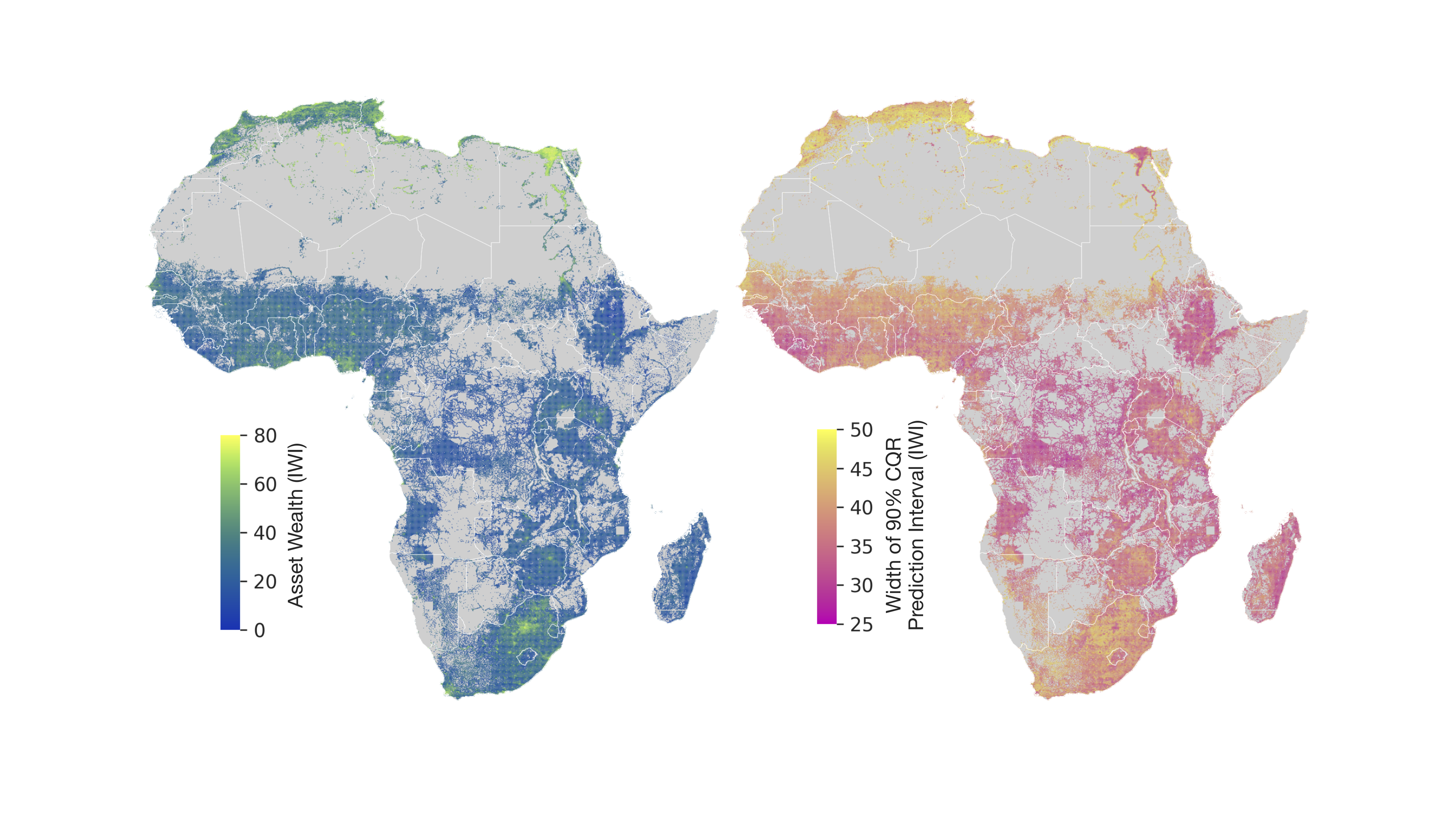}
    \caption{Neighborhood-level maps of predicted asset wealth (left) and uncertainty (right), measured as the width of the 90\% prediction interval given by conformal quantile regression. Predicted wealth is highest in North Africa, along the Nile valley, in South Africa, and around cities and transport corridors, while most of the sub-Saharan interior shows low predicted IWI. There are clear spatial patterns for uncertainty as well, but they are not clearly correlated with wealth. For example, most of the relatively wealthy North Africa have wide prediction intervals, while Egypt does not.}
    \label{fig:iwi_and_ci_maps}
\end{figure}

\subsection{Reliable EO-ML classification (CT)}
\label{sec:ct_res}

\begin{table}[!htbp]
    \centering
    \begin{tabular}{rlllll}
    \toprule
     & \makecell[l]{Error Rate \\ ``Below''} & \makecell[l]{Error Rate \\ ``Above''} & \makecell[l]{Decision \\ Coverage} & \makecell{Accuracy} & \makecell{Error Rate} \\
    \midrule
    Point prediction & \textcolor{red}{0.201} & \textcolor{red}{0.133} & \st{1.000} & \st{0.843} & \st{0.157} \\
    Naive intervals & 0.027 & 0.004 & 0.465 & 0.453 & 0.012 \\
    CQR intervals & 0.009 & 0.000 & 0.340 & 0.337 & 0.003 \\
    CT (ours) & \textbf{0.050} & \textbf{0.050} & \textbf{0.693} & \textbf{0.644} & \textbf{0.050} \\
    \bottomrule
    \end{tabular}
    \caption{Results on using the different methods for threshold classification when the threshold $\beta$ is 20 IWI and the target coverage $\alpha$ is 0.95. Our ``Conformalized thresholding'' (CT) method achieves the highest decision coverage of all methods respecting the $\alpha$ coverage. Red entries mark error rates that exceed the 5\% targets; struck-out values are displayed for completeness but are not valid comparators, since the method attains them only by violating the error constraints. We use the total number of samples, not just the number of predictions, in the denominator when calculating Accuracy and Error Rate.}
    \label{tab:james_table}
\end{table}

To illustrate the need for the extended conformal procedure, consider the task of determining whether a location falls below a fixed poverty-line, $\beta$, set to 20 IWI. The objective is to classify neighborhoods as ``Below'' or ``Above'' this threshold, while capping the error rate for both classes at 5\%. Subject to these constraints, we aim to maximize decision coverage, i.e., the share of neighborhoods for which the method makes a definitive classification rather than returning ``Indeterminate.'' As in Figure~\ref{fig:ct_panel}D, a neighborhood is classified as ``Below'' if its entire prediction interval falls below $\beta$ and as ``Above'' if the entire interval falls above.

As seen in Table~\ref{tab:james_table}, simply classifying all units based on their point prediction naturally leads to full decision coverage, but violates both error constraints ($0.20$ for below, $0.13$ for above). The naive intervals of our EO-SQR model satisfy both thresholds in this instance, despite not having a finite-sample guarantee, but classify fewer than half the samples. The traditional Conformal Quantile Regression intervals with $\alpha=0.95$ add statistical validity, but are overly conservative here, reducing decision coverage further to 34.0\%. By contrast, our Conformal Thresholding method meets both target error rates exactly ($0.05$ for below, $0.05$ for above) while retaining substantially higher decision coverage, classifying 69.3\% of samples.

\subsection{Reliable EO-ML for aid allocation (SAFE)}

We evaluate SAFE in simulations covering the seventeen countries whose DHS surveys were completed after 2020. To approximate deployment without using local observations to train the EO-SQR model, we generate predictions for each evaluation country using an EO-SQR model trained exclusively on surveys from the other countries in the dataset. We define eligibility using a country-specific poverty threshold, $\beta$, set to the first quartile of the evaluation country's surveyed IWI distribution. We assume a fixed survey cost of $c=100$ per neighborhood and a total budget of $T=cN$, where $N$ is the estimated number of neighborhoods in the country. As a DHS enumeration area may include up to 300 households, we approximate $N$ by dividing the country's population by 300 \citep{Elkasabi2020, tatem_worldpop_2017}.

For decision-makers, the central risk parameter is the maximum share of eligible neighborhoods they are willing to leave without aid. We call this the \ExclusionRate and vary it from 0 to 0.30 as a robustness check. A target of zero is the most conservative: no eligible neighborhood may be excluded. Meeting this target generally requires more surveys and/or allocating aid to more ineligible neighborhoods, leaving less of the budget available for eligible neighborhoods. Allowing for a higher exclusion rate relaxes this constraint and can increase the amount available for each recipient. We therefore evaluate a policy frontier that, for each \ExclusionRate, shows the maximum aid delivered per eligible recipient.

In addition to our proposed SAFE method, we compare against several benchmark strategies. Two natural baselines guarantee full coverage. The first, which we call ``Give all,'' divides the transfer budget evenly across all neighborhoods in the country. As a result, no neighborhood (whether eligible or ineligible) is excluded from receiving aid, but a substantial portion of the budget is misallocated to ineligible recipients. The second baseline, ``Survey all,'' takes the opposite approach by conducting a complete census to identify every eligible neighborhood before distributing aid. This eliminates misallocation but incurs substantial survey costs, reducing the budget available for transfers. To ensure a fair comparison with SAFE, both baselines drops a fraction of neighborhoods sampled uniformly at random, so that all methods operate under the same allowable exclusion rate.
We also evaluate ``CQR,'' which uses the SQR-EO model together with standard conformal quantile regression to construct prediction intervals with $\alpha$ coverage and classifies neighborhoods as described in Section~\ref{sec:ct_res}. Finally, we include an ``Oracle'' strategy, representing the best achievable outcome under the assumption that each neighborhood's eligibility status is known in advance.

\begin{figure}[!htbp]
    \centering
    \includegraphics[width=0.75\linewidth]{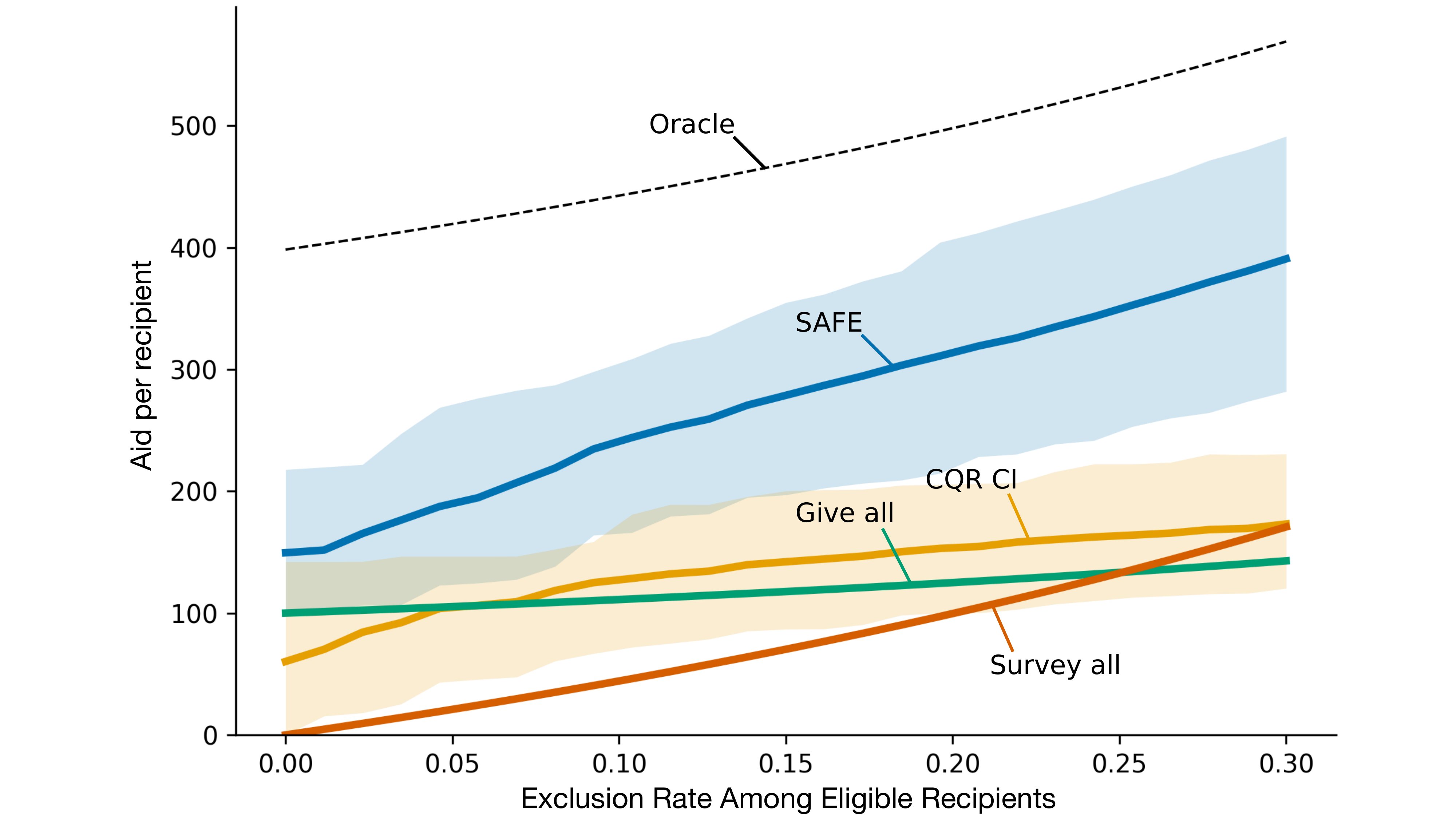}
    \caption{\textbf{Aid-allocation frontiers under varying exclusion tolerance.} Curves show the mean aid delivered per eligible recipient across 17 countries as the maximum permitted share of eligible neighborhoods left without aid varies from 0 to 0.30. Shaded regions span the lowest- and highest-performing countries. SAFE achieves the highest return among the implementable strategies throughout and approaches the unattainable ``Oracle'' benchmark as the permitted exclusion rate increases.}
    \label{fig:pareto_fronts}
\end{figure}

Figure~\ref{fig:pareto_fronts} compares the policy frontiers, with shaded regions showing the range between the best- and worst-performing countries at each exclusion rate. Across the full range of target rates, $\Safe$ delivers more aid per eligible recipient on average than the other implementable strategies. Part of the increase in aid as the target rate rises is mechanical: when fewer eligible neighborhoods receive transfers, the available budget is divided among fewer recipients, as illustrated by the ``Give all'' strategy. A higher tolerance for exclusion can also reduce targeting costs. For example, ``Survey all'' requires fewer surveys as the permitted exclusion rate increases and therefore overtakes ``Give all'' at higher target rates. More generally, a less stringent target allows the uncertainty-aware procedures to rely more heavily on EO-ML predictions and less on costly surveys. As a result, $\Safe$ approaches the performance of the ``Oracle'' as the permitted \ExclusionRate increases.

\begin{figure}[!htbp]
    \centering
    \includegraphics[width=\linewidth]{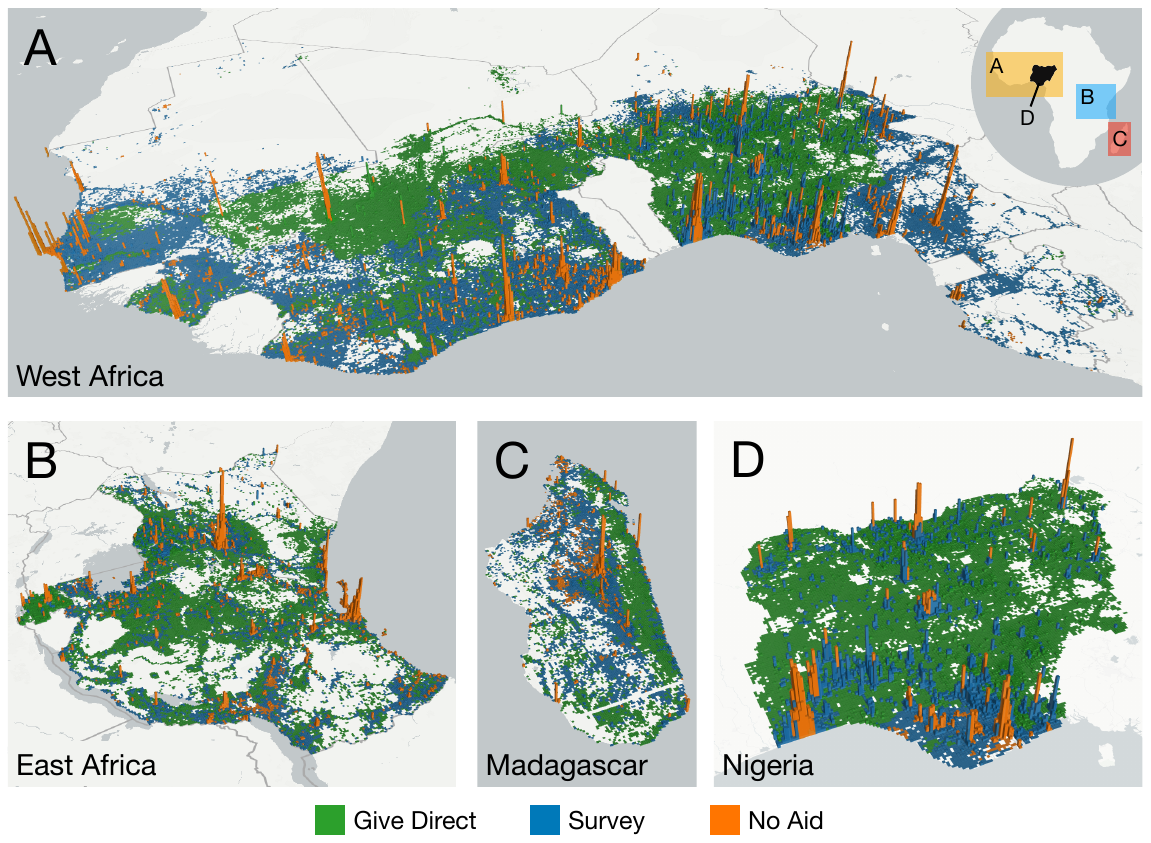}
    \caption{\textbf{Spatial allocation decisions produced by SAFE.} SAFE is applied to populated raster cells in countries with DHS surveys completed after 2020, using the most recent national survey as the calibration set. Bar height is proportional to cell population, while color indicates whether aid is given directly, withheld, or preceded by a follow-up survey. In general, aid tends to be allocated directly to poorer rural areas, withheld from dense urban centers, and directed through surveys in more ambiguous areas, including many smaller cities. These patterns vary locally: panel \textbf{D}, for example, shows sparsely populated rural cells in the relatively wealthier South-East Nigeria being assigned surveys rather than direct aid.}
    \label{fig:SAFE_map_panel}
\end{figure}

Figure~\ref{fig:SAFE_map_panel} illustrates the spatial allocation produced by SAFE with an exclusion rate of $0.05$, using the most recent available survey conducted in or after 2020 for each DHS country. Because the survey enumeration-area boundaries are unavailable, we apply SAFE to a raster grid, as with the maps presented in Figure~\ref{fig:iwi_and_ci_maps}, with bar height representing the population of each grid cell. The procedure allocates aid directly where a cell can be classified as eligible with sufficient confidence, withholds aid where it can be classified as ineligible, and directs follow-up surveys to cells for which the available evidence is inconclusive. Overall, direct allocation is most common in poorer rural areas, while surveys occur frequently in urban and peri-urban areas and in some rural pockets. It's worth noting that this information is not explicitly given to the model, but a pattern which the model has learned from the data.

\section{Discussion}
% The two panels tell complementary stories. Predicted wealth is highest in North Africa, along the Nile valley, in South Africa, and around cities and transport corridors, while most of the sub-Saharan interior shows low predicted IWI. Uncertainty follows a different geography: prediction intervals are narrowest across the uniformly poor central belt and widest in the wealthier, more heterogeneous north and south. 
%Descriptively, narrow intervals coincide with the uniformly poor central belt, while wider intervals appear in the wealthier and more heterogeneous north and south. This pattern is consistent with interval width tracking local outcome variability rather than wealth level itself, although we have not formally decomposed the drivers of width. 

EO-ML poverty mapping has reached point-prediction accuracy that makes it tempting to use model outputs directly for policy decisions \cite{Jean2016,Yeh2020,chi_microestimates_2022,pettersson2023,Wang2024}. This study shows both why that temptation is risky and why it still remains promising. Although our model's out-of-sample fit matches the upper end of prior work, conformalized quantile regression produces prediction intervals that remain wide for most neighborhoods. Interval widths are fundamentally determined by the error of the underlying predictions. Thus, other EO-ML poverty models---which have similar or lower accuracy---likely entail comparable uncertainty without explicitly quantifying it.

The practical implication is not that EO-ML estimates should be discarded, but rather that their uncertainty must be incorporated into policy decisions. CT and SAFE methods do this by allowing for abstention when the evidence is insufficient. They act on EO-ML predictions when the signal is sufficiently clear and defer to follow-up measurement when it is not, maintaining statistical guarantees for the vulnerable population. The goal is not to pitch EO-ML estimates and surveys against each other, but to show how they, when combined, can lead to policy decisions which are both safe and cost-effective.

The proposed calibration and decision procedures do not depend on satellite imagery, asset wealth, or a transformer architecture. They can be paired with any predictor that supplies suitable scores or quantiles and with a labeled calibration sample that is exchangeable with the intended deployment population. Possible inputs therefore include mobile-phone, administrative, or survey-linked data in addition to EO imagery \citep{blumenstock_estimating_2018}, and the same recipe extends to other EO-ML products that inform action, such as gridded population mapping and other screening problems \cite{tatem_worldpop_2017}. Phone-based poverty predictions have already informed humanitarian targeting at national scale \citep{aiken_machine_2022}. Distribution-free calibration could thus provide a common interface with controlled error rates and transparent abstention, something more important for planners than a higher $R^2$.

The guarantees provided by conformal prediction depend on the calibration data being exchangeable with the population in which the model is deployed. When a program expands to a new domain, a calibration set drawn from an earlier context will likely no longer represent the current errors \citep{angelopoulos2023conformal}. In practice, the most robust path is likely to budget for a small, recent calibration survey in each deployment region and to refresh it as conditions change. Importantly, the guarantees are marginal over the entire population, and thus do not necessarily hold for all policy-relevant subgroups. If calibration data for these groups exists, this could likely be remedied by combining CT and SAFE with class-conditional conformal prediction, as described by \citet{angelopoulos2023conformal}, but we leave this as future work. % An interesting future avenue of research would be an ML assisted online version of the aid-allocation problem, where a model is used for \mbp{...}

A further limitation is that SAFE treats eligibility as a binary status determined by a fixed poverty threshold. This formulation assigns the same importance to all classification errors of a given type, even though their welfare consequences may differ substantially. For example, allocating aid to a neighborhood just above the threshold may arguable bring greater utility than conducting an additional survey. A useful extension would therefore replace binary eligibility with a continuous, policy-specific utility function that accounts for the severity of poverty, while still maintaining the coverage guarantee for neighborhoods below the threshold. Incorporating such a utility into SAFE, following related utility-based approaches to aid targeting like in \citet{aiken_machine_2022}, could direct surveys toward cases in which resolving uncertainty has the greatest expected welfare benefit.

Taken together, our findings show that strong predictive performance alone is not sufficient for responsible policy use of EO-ML poverty estimates. By making uncertainty explicit and directing follow-up surveys to cases in which the available evidence is insufficient, CT and SAFE combine the scale of EO-ML with the reliability of conventional measurement. This provides a practical path from accurate poverty mapping toward accountable, uncertainty-aware decision-making.

\subsection*{Acknowledgments and Disclosure of Funding}

Computational resources were provided by the National Academic Infrastructure for Supercomputing in Sweden (NAISS), funded by the Swedish Research Council. MBP, JB and AD were supported by SRC grant numbers 2020-03088 and 2020-00491, as well as the European Horizon project ``ToBe -- Towards a sustainable wellbeing economy: integrated policies and transformative indicators'' (grant agreement number 101094211).

%TC:ignore

\clearpage
\bibliographystyle{plainnat}
\bibliography{main}

@article{Smits2015,
   author = {Jeroen Smits and Roel Steendijk},
   doi = {10.1007/s11205-014-0683-x},
   issn = {1573-0921},
   issue = {1},
   journal = {Social Indicators Research},
   pages = {65-85},
   title = {The {International} {Wealth} {Index} ({IWI})},
   volume = {122},
   url = {https://doi.org/10.1007/s11205-014-0683-x},
   year = {2015}
}

@article{Kingma2014,
   author = {Diederik P. Kingma and Jimmy Ba},
   title = {Adam: A Method for Stochastic Optimization},
   url = {http://arxiv.org/abs/1412.6980},
   journal={arXiv:1412.6980},
   year = {2014}
}

@article{Yeh2020,
   author = {Christopher Yeh and Anthony Perez and Anne Driscoll and George Azzari and Zhongyi Tang and David Lobell and Stefano Ermon and Marshall Burke},
   doi = {10.1038/s41467-020-16185-w},
   issn = {20411723},
   issue = {1},
   journal = {Nature Communications},
   title = {Using publicly available satellite imagery and deep learning to understand economic well-being in {Africa}},
   volume = {11},
   year = {2020}
}

@article{Gorelick2017,
   author = {Noel Gorelick and Matt Hancher and Mike Dixon and Simon Ilyushchenko and David Thau and Rebecca Moore},
   doi = {10.1016/j.rse.2017.06.031},
   issn = {00344257},
   journal = {Remote Sensing of Environment},
   title = {Google {Earth} {Engine}: Planetary-scale geospatial analysis for everyone},
   volume = {202},
   year = {2017}
}

@techreport{Burgert2013,
    author={Burgert, Clara R. and Colston, Josh and Roy, Thea and Zachary, Blake},
    title={Geographic displacement procedure and georeferenced data release policy for the {Demographic} and {Health} {Surveys}},
    series={{DHS} Spatial Analysis Reports No. 7},
    year={2013},
    publisher={ICF International},
    institution={ICF International},
    address={Calverton, Maryland, USA},
    url={http://dhsprogram.com/pubs/pdf/SAR7/SAR7.pdf}
}

@article{Jean2016,
   author = {Neal Jean and Marshall Burke and Michael Xie and W. Matthew Davis and David B. Lobell and Stefano Ermon},
   doi = {10.1126/science.aaf7894},
   issn = {10959203},
   issue = {6301},
   journal = {Science},
   title = {Combining satellite imagery and machine learning to predict poverty},
   volume = {353},
   year = {2016}
}

@misc{dhs,
  author = {{DHS Program}},
  title = {{Demographic} and {Health} {Surveys}},
  year = 2026,
  howpublished = {\url{www.dhsprogram.com}}
}

@article{tatem_worldpop_2017,
    author={Tatem, Andrew J.},
    title={{WorldPop}, open data for spatial demography},
    journal={Scientific Data},
    year={2017},
    volume={4},
    number={1},
    pages={170004},
    issn={2052-4463},
    doi={10.1038/sdata.2017.4},
    url={https://doi.org/10.1038/sdata.2017.4}
}

@article{groves_total_2010,
	title = {Total Survey Error: Past, Present, and Future},
	volume = {74},
	issn = {0033-362X},
	shorttitle = {Total {Survey} {Error}},
	url = {https://academic.oup.com/poq/article/74/5/849/1817502},
	doi = {10.1093/poq/nfq065},
	number = {5},
	journal = {Public Opinion Quarterly},
	author = {Groves, Robert M. and Lyberg, Lars},
	year = {2010},
	pages = {849--879}
}

@book{lavrakas_encyclopedia_2008,
	address = {2455 Teller Road, Thousand Oaks California 91320 United States of America},
	title = {Encyclopedia of {Survey} {Research} {Methods}},
	isbn = {978-1-4129-1808-4 978-1-4129-6394-7},
	url = {http://methods.sagepub.com/reference/encyclopedia-of-survey-research-methods},
	publisher = {Sage Publications, Inc.},
	author = {Lavrakas, Paul},
	year = {2008},
	doi = {10.4135/9781412963947}
}

@article{chi_microestimates_2022,
	title = {Microestimates of wealth for all low- and middle-income countries},
	volume = {119},
	issn = {0027-8424, 1091-6490},
	url = {http://www.pnas.org/lookup/doi/10.1073/pnas.2113658119},
	doi = {10.1073/pnas.2113658119},
	number = {3},
	journal = {Proceedings of the National Academy of Sciences},
	author = {Chi, Guanghua and Fang, Han and Chatterjee, Sourav and Blumenstock, Joshua E.},
	year = {2022},
	pages = {e2113658119}
}

@article{blumenstock_estimating_2018,
	title = {Estimating Economic Characteristics with Phone Data},
	volume = {108},
	issn = {2574-0768},
	url = {https://www.aeaweb.org/doi/10.1257/pandp.20181033},
	doi = {10.1257/pandp.20181033},
	journal = {AEA Papers and Proceedings},
	author = {Blumenstock, Joshua E.},
	year = {2018},
	pages = {72--76}
}

@misc{dhs_harmonization,
  author={Ekbrand, Hans},
  year={2026},
  title={{DHSharmonisation}},
  howpublished = {\url{https://bitbucket.org/hansekbrand/dhsharmonisation/}},
  note = {Software package}
}

@inproceedings{
    satmae2022,
    title={Sat{MAE}: Pre-training Transformers for Temporal and Multi-Spectral Satellite Imagery},
    author={Yezhen Cong and Samar Khanna and Chenlin Meng and Patrick Liu and Erik Rozi and Yutong He and Marshall Burke and David B. Lobell and Stefano Ermon},
    booktitle={Advances in Neural Information Processing Systems},
    editor={Alice H. Oh and Alekh Agarwal and Danielle Belgrave and Kyunghyun Cho},
    year={2022},
    url={https://openreview.net/forum?id=WBhqzpF6KYH}
}

@inproceedings{sqr2018,
    title={Single-Model Uncertainties for Deep Learning},
    author={Natasa Tagasovska and David Lopez-Paz},
    booktitle={Neural Information Processing Systems},
    year={2018},
    url={https://api.semanticscholar.org/CorpusID:202539625}
}

@inproceedings{
    pettersson2023,
    author       = {Markus B. Pettersson and
                  Mohammad Kakooei and
                  Julia Ortheden and
                  Fredrik D. Johansson and
                  Adel Daoud},
    title        = {Time Series of Satellite Imagery Improve Deep Learning Estimates of Neighborhood-Level Poverty in {Africa}},
    booktitle    = {Proceedings of the Thirty-Second International Joint Conference on Artificial Intelligence, {IJCAI-23}},
    pages        = {6165--6173},
    publisher    = {International Joint Conferences on Artificial Intelligence Organization},
    year         = {2023},
    url          = {https://doi.org/10.24963/ijcai.2023/684},
    doi          = {10.24963/ijcai.2023/684}
}

@INPROCEEDINGS{mae2022,
  author={He, Kaiming and Chen, Xinlei and Xie, Saining and Li, Yanghao and Dollár, Piotr and Girshick, Ross},
  booktitle={2022 IEEE/CVF Conference on Computer Vision and Pattern Recognition (CVPR)}, 
  title={Masked Autoencoders Are Scalable Vision Learners}, 
  year={2022},
  volume={},
  number={},
  pages={15979-15988},
  doi={10.1109/CVPR52688.2022.01553}
}

@article{fox1964,
author = {Martin Fox and Herman Rubin},
title = {Admissibility of Quantile Estimates of a Single Location Parameter},
volume = {35},
journal = {The Annals of Mathematical Statistics},
number = {3},
publisher = {Institute of Mathematical Statistics},
pages = {1019--1030},
year = {1964},
doi = {10.1214/aoms/1177700518},
URL = {https://doi.org/10.1214/aoms/1177700518}
}

@Article{Li2020,
author={Li, Xuecao
and Zhou, Yuyu
and Zhao, Min
and Zhao, Xia},
title={A harmonized global nighttime light dataset 1992--2018},
journal={Scientific Data},
year={2020},
volume={7},
number={1},
pages={168},
issn={2052-4463},
doi={10.1038/s41597-020-0510-y},
url={https://doi.org/10.1038/s41597-020-0510-y}
}

@inproceedings{
dosovitskiy2021,
title={An Image is Worth 16x16 Words: Transformers for Image Recognition at Scale},
author={Alexey Dosovitskiy and Lucas Beyer and Alexander Kolesnikov and Dirk Weissenborn and Xiaohua Zhai and Thomas Unterthiner and Mostafa Dehghani and Matthias Minderer and Georg Heigold and Sylvain Gelly and Jakob Uszkoreit and Neil Houlsby},
booktitle={International Conference on Learning Representations},
year={2021},
url={https://openreview.net/forum?id=YicbFdNTTy}
}

@book{vovk2005,
  title = {Algorithmic Learning in a Random World},
  author = {Vovk, Vladimir and Gammerman, Alexander and Shafer, Glenn},
  year = {2005},
  publisher = {Springer},
  location = {New York, NY},
  doi = {10.1007/b106715},
  url = {http://link.springer.com/10.1007/b106715},
  isbn = {978-0-387-00152-4},
  pagetotal = {XVI, 324}
}

@misc{landsat8,
  author       = {{Earth Resources Observation and Science (EROS) Center}},
  title        = {Landsat 8--9 {Operational} {Land} {Imager} / {Thermal} {Infrared} {Sensor} {Level}-2, {Collection} 2},
  year         = {2020},
  publisher    = {U.S. Geological Survey},
  doi          = {10.5066/P9OGBGM6},
  url          = {https://doi.org/10.5066/P9OGBGM6},
  note         = {Dataset}
}

@misc{landsat7,
  author       = {{Earth Resources Observation and Science (EROS) Center}},
  title        = {Landsat 7 {Enhanced} {Thematic} {Mapper} {Plus} {Level}-2, {Collection} 2},
  year         = {2020},
  publisher    = {U.S. Geological Survey},
  doi          = {10.5066/P9C7I13B},
  url          = {https://doi.org/10.5066/P9C7I13B},
  note         = {Dataset}
}

@misc{landsat45,
  author       = {{Earth Resources Observation and Science (EROS) Center}},
  title        = {Landsat 4-5 {Thematic} {Mapper} {Level}-2, {Collection} 2},
  year         = {2020},
  publisher    = {U.S. Geological Survey},
  doi          = {10.5066/P9IAXOVV},
  url          = {https://doi.org/10.5066/P9IAXOVV},
  note         = {Dataset}
}

@inproceedings{cqr_romano,
 author = {Romano, Yaniv and Patterson, Evan and Candes, Emmanuel},
 booktitle = {Advances in Neural Information Processing Systems},
 editor = {H. Wallach and H. Larochelle and A. Beygelzimer and F. d\textquotesingle Alch\'{e}-Buc and E. Fox and R. Garnett},
 pages = {},
 publisher = {Curran Associates, Inc.},
 title = {Conformalized Quantile Regression},
 url = {https://proceedings.neurips.cc/paper\_files/paper/2019/file/5103c3584b063c431bd1268e9b5e76fb-Paper.pdf},
 volume = {32},
 year = {2019}
}

@article{Wang2024,
author = {Mengjie Wang and Xi Li},
title = {Estimating asset wealth using multidimensional luminous information in areas lacking nighttime light},
journal = {International Journal of Digital Earth},
volume = {17},
number = {1},
pages = {2336049},
year = {2024},
publisher = {Taylor \& Francis},
doi = {10.1080/17538947.2024.2336049},
URL = {https://doi.org/10.1080/17538947.2024.2336049},
eprint = {https://doi.org/10.1080/17538947.2024.2336049}
}

@article{sherman2026global,
  title={Global high-resolution estimates of the {UN} {Human} {Development} {Index} using satellite imagery and machine learning},
  author={Sherman, Luke and Proctor, Jonathan and Druckenmiller, Hannah and Tapia, Heriberto and Hsiang, Solomon},
  journal={Nature Communications},
  volume={17},
  number={1},
  pages={1315},
  year={2026},
  publisher={Nature Publishing Group UK London}
}

@article{zheng2025dynamic,
  title={Dynamic, high-resolution poverty measurement in data-scarce environments},
  author={Zheng, Zhuo and Wu, Timothy and Lee, Richard and Newhouse, David and Kilic, Talip and Burke, Marshall and Ermon, Stefano and Lobell, David B},
  journal={Journal of Development Economics},
  pages={103691},
  year={2025},
  publisher={Elsevier}
}

@article{marty2024global,
  title={Global poverty estimation using private and public sector big data sources},
  author={Marty, Robert and Duhaut, Alice},
  journal={Scientific Reports},
  volume={14},
  number={1},
  pages={3160},
  year={2024},
  publisher={Nature Publishing Group UK London}
}

@article{angelopoulos2023conformal,
  title={Conformal prediction: A gentle introduction},
  author={Angelopoulos, Anastasios N and Bates, Stephen},
  journal={Foundations and Trends in Machine Learning},
  volume={16},
  number={4},
  pages={494--591},
  year={2023},
  publisher={Emerald Publishing Limited}
}

@techreport{sahoo2025would,
  title={What Would it Cost to End Extreme Poverty?},
  author={Sahoo, Roshni and Blumenstock, Joshua and Niehaus, Paul and Selker, Leo and Wager, Stefan},
  year={2025},
  institution={National Bureau of Economic Research}
}

@article{aiken_machine_2022,
  title   = {Machine learning and phone data can improve targeting of humanitarian aid},
  author  = {Aiken, Emily and Bellue, Suzanne and Karlan, Dean and Udry, Chris and Blumenstock, Joshua E.},
  journal = {Nature},
  volume  = {603},
  number  = {7903},
  pages   = {864--870},
  year    = {2022},
  doi     = {10.1038/s41586-022-04484-9}
}

@misc{worldpop_r2025a,
  author={Bondarenko, Maksym and Priyatikanto, Rhorom and Tejedor-Garavito, Natalia and Zhang, Wei and McKeen, Tessa and Cunningham, Andrew and Woods, Thomas and Hilton, Jason and Cihan, Derya and Nosatiuk, Bogdan and Brinkhoff, Thomas and Tatem, Andrew and Sorichetta, Alessandro},
  title={Constrained estimates of 2015--2030 total number of people per grid square at a resolution of 3 arc (approximately 100 m at the equator), R2025A version v1},
  year={2025},
  howpublished={WorldPop, School of Geography and Environmental Science, University of Southampton},
  doi={10.5258/SOTON/WP00839},
  url={https://hub.worldpop.org/doi/10.5258/SOTON/WP00839}
}

@techreport{Elkasabi2020,
  author      = {Elkasabi, Mahmoud and Ren, Ruilin and Pullum, Thomas W.},
  title       = {Multilevel Modeling Using {DHS} Surveys: A Framework to Approximate Level-Weights},
  institution = {ICF},
  type        = {{DHS} Methodological Reports},
  number      = {27},
  year        = {2020},
  address     = {Rockville, Maryland, USA},
  url         = {https://www.dhsprogram.com/pubs/pdf/MR27/MR27.pdf}
}

@article{gasparin2024merging,
  title={Merging uncertainty sets via majority vote},
  author={Gasparin, Matteo and Ramdas, Aaditya},
  journal={arXiv preprint arXiv:2401.09379},
  year={2024}
}

@article{aikenWhenShouldBig2025,
  title = {When Should Big Data and Algorithms Be Used to Determine Programme Eligibility?},
  author = {Aiken, Emily and Ashraf, Anik and Blumenstock, Joshua and Guiteras, Raymond and Mushfiq Mobarak, Ahmed and Hu, Nicole},
  year = {2025},
  journal = {VoxDev},
  url = {https://voxdev.org/topic/methods-measurement/when-should-big-data-and-algorithms-be-used-determine-programme}
}

\clearpage
\appendix

% Number appendix floats independently from those in the main manuscript.
\setcounter{figure}{0}
\setcounter{table}{0}
\renewcommand{\figurename}{Supplementary Figure}
\renewcommand{\tablename}{Supplementary Table}

\section{Detailed methods and technical setup}
\label{app:detailed_methods}

\subsection{SQR EO-ML model}
%\section{EO-ML model details}
\label{sec:model_details}

\jbl{Change the use of $\alpha$ in the appendix so that it matches how we use it in the main text: as the desired coverage rate (e.g., 95\%), rather than the desired error rate (e.g., 5\%)}

\subsubsection{Input construction}

We use Landsat 5, 7, and 8 multispectral imagery \citep{Gorelick2017}. For each neighborhood location, we extract $224\times224$ patches (approximately $6.72\times6.72$ km at 30 m resolution) with six channels: Blue, Green, Red, NIR1, NIR2, and SWIR. The patch size is chosen to match the neighborhood-scale unit used for DHS-cluster prediction and to remain comparable with prior EO-ML work \citep{pettersson2023, Yeh2020}.

Landsat revisits are frequent, about once every 16th day, but many frames are unusable due to cloud contamination \citep{landsat8}. For each location, we therefore select up to 25 low-cloud Landsat frames from the historical archive and additionally include the least-cloudy frame from the year preceding the survey date. This yields computationally tractable sequences while preserving both long-run context and recent pre-survey signal.

We use harmonized nighttime-light data that bridges DMSP (1992-2013) and VIIRS (2012 onward) to obtain a comparable temporal signal \citep{Li2020}. Because native nighttime-light resolution is coarse (about 1 km/pixel), each sample covers a $7 \times 7$ nighttime-light patch over the same area as the Landsat crop.

\subsubsection{Model architecture}

The model has a spatial encoder, a temporal encoder, and a scalar prediction head. Each Landsat frame is encoded with a ViT-based spatial encoder, while each nighttime-light frame is projected to the same embedding size using a linear layer \citep{dosovitskiy2021}. Temporal encodings are then added to all frame tokens before temporal aggregation.

\begin{figure}[!htbp]
    \centering
    \includegraphics[width=\textwidth]{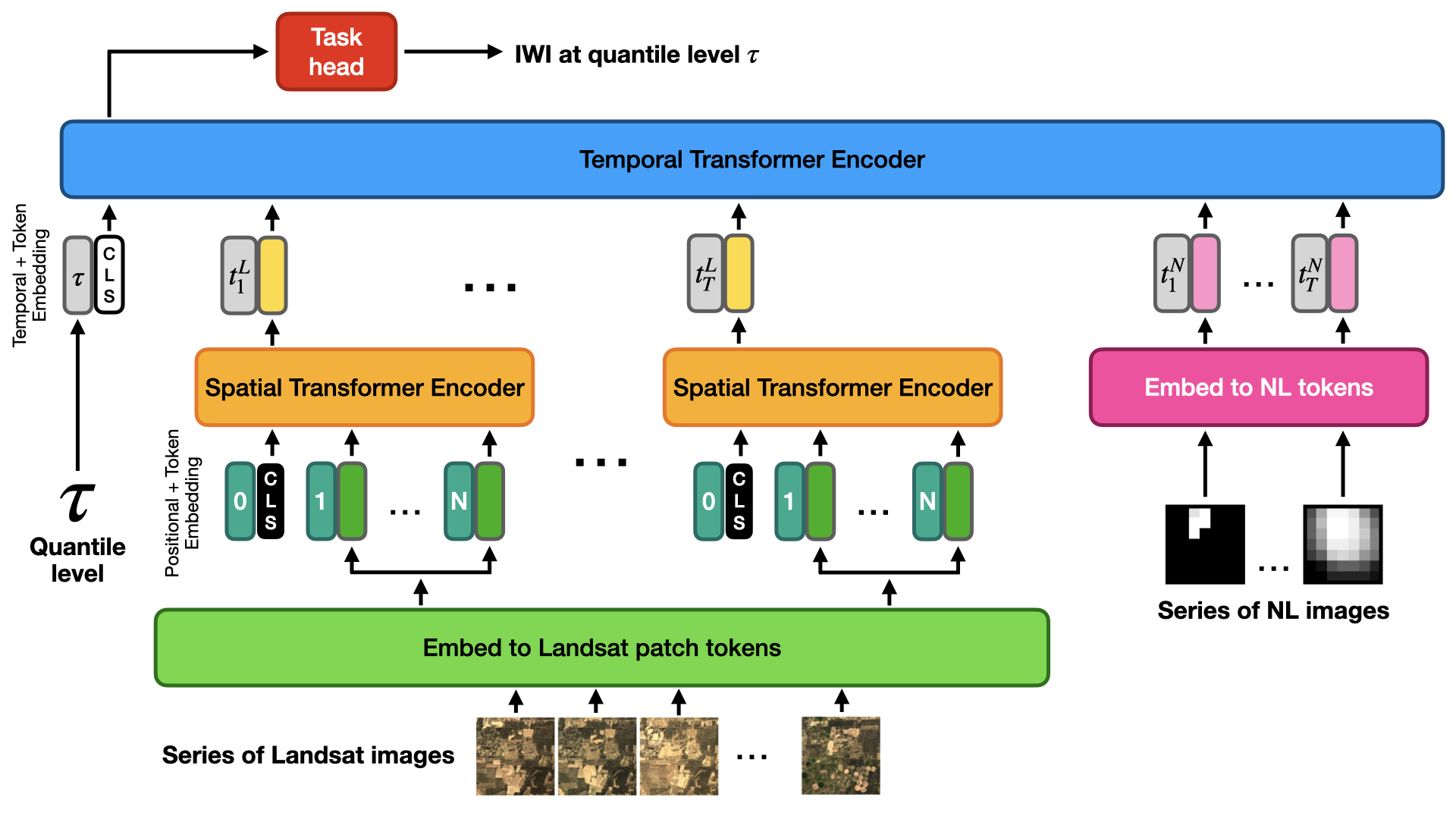}
    \caption{Architecture of the EO-ML model. Landsat frames are encoded by a spatial transformer, combined with nighttime-light images and conditioning tokens in a temporal transformer, and mapped to a scalar IWI output.}
    \label{fig:model_architecture}
\end{figure}

As EO sequences are irregularly sampled, we use explicit time encodings instead of relying only on input order. Following the design of SATMAE by \citet{satmae2022}, each frame is encoded using year, month, and hour components, where year is represented relative to the survey year (to avoid leakage from absolute calendar time). The sinusoidal base encoding is:
\begin{equation*}
\mathrm{Encode}(k,2i)=\sin\left(\frac{k}{\Omega^{2i/d}}\right),\qquad
\mathrm{Encode}(k,2i+1)=\cos\left(\frac{k}{\Omega^{2i/d}}\right),
\end{equation*}
and the frame-level temporal feature is constructed by concatenating encoded year, month, and hour components.

The temporal transformer receives these frame tokens together with a dedicated $\tau$-token used for quantile conditioning. The final contextual representation (via the transformer output token) is passed through a linear task head to produce a scalar IWI estimate at quantile level $\tau$.

\subsubsection{Self-supervised pretraining}

To improve representation quality under limited labeled survey data, the spatial Landsat encoder is initialized from masked-autoencoder (MAE) pretraining \citep{mae2022}. Pretraining uses approximately 300,000 additional unlabeled locations across Africa, sampled with population weighting based on WorldPop \cite{tatem_worldpop_2017}. A high masking ratio (75\%) is used during MAE training on single-frame Landsat inputs; learned weights are then transferred to the supervised IWI model.

\subsubsection{Optimization and inference settings}

The supervised model is optimized with AdamW (learning rate $10^{-4}$, effective batch size 16) for up to 200 epochs \citep{Kingma2014}. During training, we subsample frame sequences to increase variation (average of about 10 Landsat frames and 5 nighttime-light frames per sample) and apply random flip augmentation. The best-validation checkpoint is retained.

At inference, all available selected frames are used (up to 25 Landsat frames per sample). The model is queried at $\tau\in\{0.00, 0.01, ..., 0.99, 1.00\}$ for predictions at each percentile. The $\tau=0.5$ output is used as the point estimate and $\tau\in\{0.05,0.95\}$ define the nominal 90\% model-based interval before conformal calibration.

\begin{figure}[!htbp]
    \centering
    \includegraphics[width=0.5\linewidth]{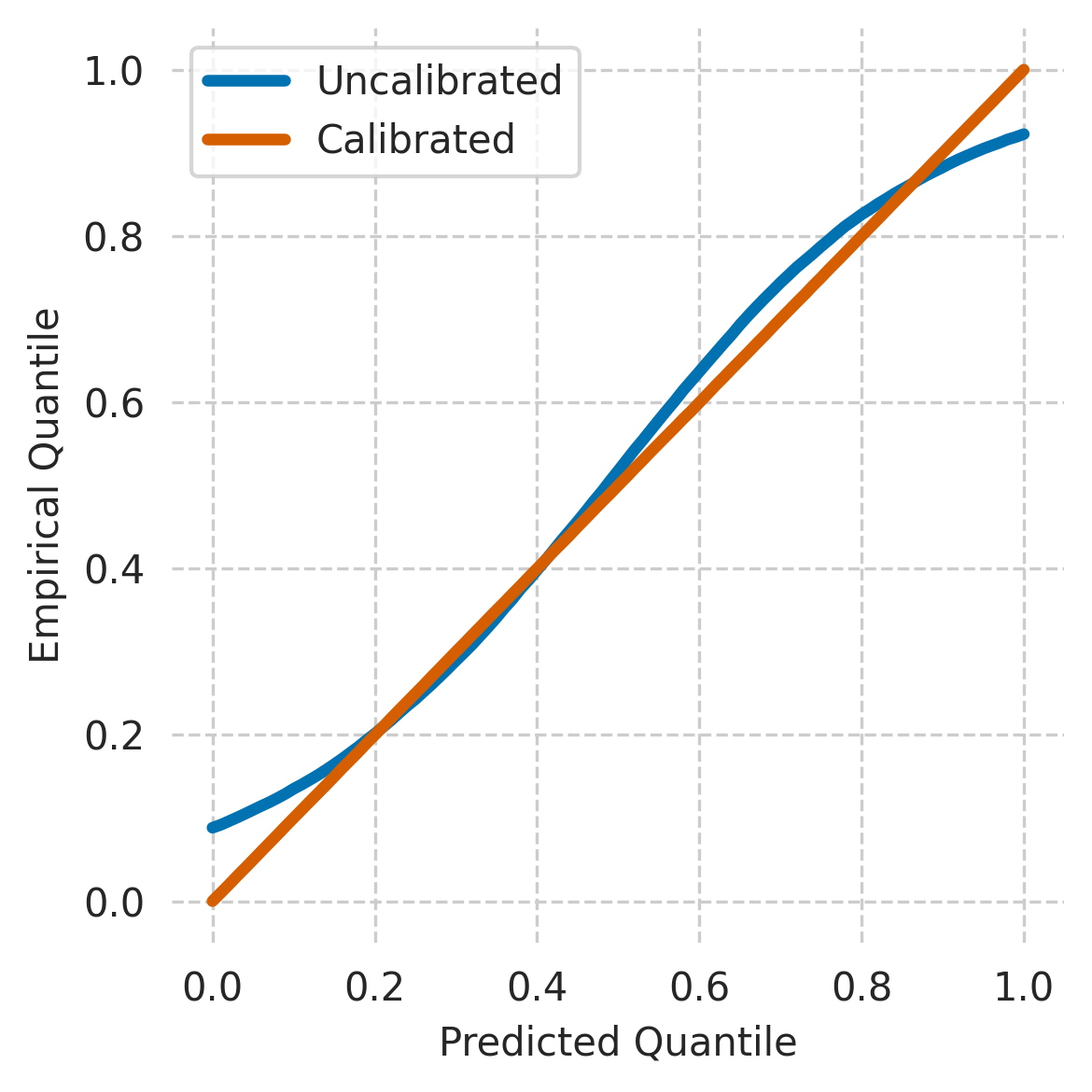}
    \caption{QQ-plot of the model predicted against the observed quantiles before and after conformal prediction calibration. The naive prediction intervals capture much of the uncertainty but undercover: the nominal 90\% interval contains the observed outcome for 79.5\% of held-out neighborhoods. After conformal calibration, coverage matches the target (90.0\%), in line with the theoretical guarantees.}
    \label{fig:qq_plot}
\end{figure}

\subsubsection{Simultaneous quantile regression}

To produce uncertainty-aware predictions, we train the EO-ML model as a simultaneous-quantile regressor rather than only as a conditional-mean predictor. For a fixed quantile level $\tau \in (0,1)$, the standard objective is the pinball loss%\jb{Removing white space so that indenting of text below equations and whitespace above align environment is corrected}
\begin{equation*}
\ell_\tau(y, \hat{y})=
\begin{cases}
\tau(y-\hat{y}) & \text{if } y-\hat{y} \ge 0,\\
(1-\tau)(\hat{y}-y) & \text{otherwise.}
\end{cases}
\end{equation*}
Minimizing this loss yields an estimate of the conditional $\tau$-quantile of $Y\mid X$ \citep{fox1964}. A direct approach would train a separate model for each quantile of interest, but that is inefficient and can produce quantile crossing, where the outcome does not increase monotonically with the quantile \citep{sqr2018}.

Instead, we use simultaneous quantile regression (SQR), %\jb{removing capitals to be consistent with the main text}
as introduced by \citet{sqr2018}, where the model takes both covariates $x$ and a quantile index $\tau$ as input and is trained over the full range of quantiles. The objective is
\begin{equation*}
\hat{f} \in \arg\min_f \frac{1}{n}\sum_{i=1}^{n}
\mathbb{E}_{\tau\sim U[0,1]}\left[\ell_\tau\big(f(x_i,\tau),y_i\big)\right].
\end{equation*}
In practice, this expectation is approximated stochastically by sampling $\tau$ values for each sample-iteration during training. In our architecture, this is implemented by conditioning the temporal transformer on a dedicated $\tau$-token, so one shared network is trained to produce quantile estimates across the full distribution. % (see Appendix~\ref{sec:model_details} for details).\jb[this reference is now self-referential]

At inference time, if we wish to evaluate a 90\% prediction interval, we make two predictions fixing $\tau$ as $0.05$ and $0.95$, obtaining a lower and upper bound of a nominal 90\% model-based interval. These intervals are informative but remain model-dependent. Coverage is therefore not guaranteed at this stage and will have to be further calibrated in the conformal step below.

\subsection{Conformalized quantile regression}
%\subsection{Conformalized quantile regression (vanilla)}

In order to calibrate the nominal prediction intervals produced by the SQR model, we use conformalized quantile regression (CQR), as introduced by \citet{cqr_romano}, on a held-out calibration set that is not used for model fitting. Under exchangeability between calibration and test points, conformal prediction guarantees marginal coverage close to the target level. For target miscoverage $\alpha$, the interval $\hat{C}(x)$ satisfies
\begin{equation*}%\jb{Removing white space so that indenting of text below equations and whitespace above align environment is corrected}
1 - \alpha \leq \mathbb{P}\left(y_{\mathrm{test}}\in \hat{C}(x_{\mathrm{test}})\right) \leq 1-\alpha + \frac{1}{n+1},
\end{equation*}
where $n$ is the number of calibration points. Starting from model quantiles $\hat{t}_{\alpha/2}(x)$ and $\hat{t}_{1-\alpha/2}(x)$, we compute nonconformity scores on calibration samples:
\begin{equation*}
s(x_i,y_i) = \max\left\{\hat{t}_{\alpha/2}(x_i) - y_i, \; y_i - \hat{t}_{1 - \alpha/2}(x_i)\right\}.
\end{equation*}

Let $\hat{q}$ be the empirical $\left\lceil (n+1)(1 - \alpha)\right\rceil / n$ quantile of these scores. The calibrated interval is then
\begin{equation*}
\hat{C}(x) = \left[\hat{t}_{\alpha/2}(x) - \hat{q}, \; \hat{t}_{1 - \alpha/2}(x) + \hat{q}\right].
\end{equation*}

Intuitively, conformal calibration expands or contracts the model-based intervals using the magnitude of held-out residuals, yielding valid marginal coverage while retaining location-specific variation from the EO-ML model.

%% [Fable_v3] The conformal-pipeline figure was moved to the Method overview in the main text
%% (Adel's instruction); see fig:cp_inference_pipeline there.

\subsection{Reliable EO-ML methods}

In many applications, the primary objective is not to predict the exact regression value, but rather to determine whether it lies above or below a fixed threshold $\beta$. Examples include assessing whether pollution levels exceed regulatory limits or whether the IWI of a neighborhood falls below a poverty threshold. A straightforward approach would be to train a dedicated classifier for each threshold, but this restricts the model to a specific choice of $\beta$. In contrast, prediction interval-based models are more flexible because the threshold can be selected or adjusted after training. Prediction intervals also provide a natural decision rule for thresholding: if the entire interval lies above $\beta$, the sample can be confidently classified as above the threshold; if the interval lies entirely below $\beta$, it can be classified as below. However, when the interval contains $\beta$, the data do not support either classification with the desired confidence. In such cases, a reliable method should abstain from assigning a class and instead label the sample as \emph{indeterminate}.

%% [Fable_v2] The thresholding-task figure (whose master caption read ``Enter Caption'') was moved to the
%% general-audience Method overview in the main text, with a drafted caption; see fig:thresholding_task there.

\subsubsection{Conformalized thresholding (CT)}

We implement reliable thresholding by extending CQR to threshold classification with set error rates. We refer to this procedure as Conformalized Thresholding (CT).

Taking inspiration from ``class-conditional conformal'' as defined by \citet{angelopoulos2023conformal}, CT uses separate conformal corrections for the two classes, i.e., the two sides of the threshold. Define the threshold scores%\jb{Removing white space so that indenting of text below equations and whitespace above align environment is corrected}
\begin{align*}
    s_{\leq}(x) &= \hat t_{\alpha_1} (x) - \beta, \\
    s_{>}(x) &= \beta - \hat t_{1-\alpha_2}(x).
\end{align*}
Let $\hat{q}_{\leq}$ be the $\lceil (n_1+1)(1-\alpha_1) \rceil / n_1$ quantile of the scores among calibration points at or below the threshold,
\begin{equation}\label{eqFirstCalibrationSet}
    \{s_{\leq} (X_i) \mid i \in \{1,\ldots,n\} \mathrm{\ s.t.\ } Y_i \le \beta\},
\end{equation} 
where $\{(X_i, Y_i)\}_{i=1}^n$ is the calibration set and $n_1$ is the cardinality of~\eqref{eqFirstCalibrationSet}. Similarly, let $\hat{q}_{>}$ be the $\lceil (n_2+1)(1-\alpha_2) \rceil / n_2$ quantile of the scores among calibration points above the threshold that are not classified as ``Above'' after the first calibration step,
\begin{equation}\label{eqSecondCalibrationSet}
    \{s_{>}(X_i) \mid i \in \{1,\ldots,n\} \mathrm{\ s.t.\ } Y_i > \beta \text{ and } \hat t_{\alpha_1}(X_i) - \hat q_{\leq} \le \beta\},
\end{equation}
where $n_2$ is the cardinality of~\eqref{eqSecondCalibrationSet}.

\begin{theorem}
\label{the:ct}
    Suppose that the calibration data points $(X_1, Y_1), \ldots, (X_n, Y_n)$ and the new data point $(X_{n+1}, Y_{n+1})$ are exchangeable. Further suppose that $\alpha_i \ge 1/n_i$ for $i = 1,2$. Then CT controls the two threshold error rates in the sense that
    \begin{align*}
        \Pr \left( \hat t_{\alpha_1}(X_{n+1}) - \hat{q}_{\leq} > \beta \mid Y_{n+1} \le \beta \right) &\le \alpha_1, \\
        \Pr \left( \hat t_{1-\alpha_2}(X_{n+1}) + \hat{q}_{>} < \beta \mid Y_{n+1} > \beta, \hat t_{\alpha_1}(X_{n+1}) - \hat q_{\leq} \le \beta \right) &\le \alpha_2.
    \end{align*}
\end{theorem}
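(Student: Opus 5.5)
The plan is to reduce both inequalities to the standard split-conformal quantile lemma: if $S_1,\ldots,S_m,S_{m+1}$ are exchangeable and $\hat q$ is the $\lceil (m+1)(1-\alpha)\rceil/m$ empirical quantile of $S_1,\ldots,S_m$, then $\Pr(S_{m+1} > \hat q) \le \alpha$. The condition $\alpha \ge 1/m$ guarantees $\lceil (m+1)(1-\alpha)\rceil \le m$, so the quantile is a finite order statistic; this is the role of the assumption $\alpha_i \ge 1/n_i$. Throughout I read $\alpha_1,\alpha_2$ as miscoverage levels, as in the rest of the appendix. The work is to show that, after the appropriate conditioning, the relevant scores are exchangeable even though the groups and the second calibration set are data-dependent.

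\textbf{First inequality.} I will condition on the event $E_1=\{Y_{n+1}\le\beta\}$ together with the set $I_1=\{i\le n: Y_i\le\beta\}$, so that $n_1=|I_1|$ is fixed. Since the indicator $\mathbf 1\{Y\le\beta\}$ is a symmetric function applied coordinatewise, exchangeability of the $n+1$ points implies that, conditionally on the full label pattern, the points in $I_1\cup\{n+1\}$ are exchangeable. The score $s_{\leq}$ is a fixed function, because $\hat t$ is trained on separate data. Hence the scores $\{s_{\leq}(X_i)\}_{i\in I_1\cup\{n+1\}}$ are exchangeable. The misclassification event $\hat t_{\alpha_1}(X_{n+1})-\hat q_{\leq}>\beta$ is exactly $s_{\leq}(X_{n+1})>\hat q_{\leq}$, so the quantile lemma gives a bound $\le\alpha_1$ conditionally on $I_1$. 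Averaging over $I_1$ then gives the stated bound.

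\textbf{Second inequality.} Now condition on $Y_{n+1}>\beta$, on the label pattern, and additionally on the entire first-group data $\{(X_i,Y_i)\}_{i\in I_1}$. The test point lies in the $>$ group, so $\hat q_{\leq}$ is a function of the first-group data only and is therefore fixed under this conditioning. Moreover, the points with $Y>\beta$ remain exchangeable with the test point, since conditioning on the other block preserves exchangeability within the block. The filter $g(x)=\mathbf 1\{\hat t_{\alpha_1}(x)-\hat q_{\leq}\le\beta\}$ is then a fixed symmetric selection rule. I will further condition on the set of $>$-group indices that pass the filter, including the event that the test point passes; this fixes $n_2$. Exchangeability is preserved among the selected points, by the same argument as for the label split. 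The scores $s_{>}$ on these $n_2+1$ points are thus exchangeable. The error event $\hat t_{1-\alpha_2}(X_{n+1})+\hat q_{>}<\beta$ is contained in $s_{>}(X_{n+1})>\hat q_{>}$, so the quantile lemma yields $\le\alpha_2$. Integrating out the conditioning gives the claim.

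\textbf{Main obstacle.} The delicate step is the second one: justifying that a selection which depends on data (through $\hat q_{\leq}$) does not break exchangeability. The resolution I will pursue is the block-conditioning argument above. The key observation is that $\hat q_{\leq}$ depends only on points disjoint from the $>$ group, so once we condition on that block the selection is a deterministic coordinatewise rule. I would state this as a small lemma: conditioning exchangeable variables on a coordinatewise-symmetric membership pattern leaves each block exchangeable. That lemma would then be applied twice, once to the label split and once to the filter.
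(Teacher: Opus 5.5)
Your proposal is correct and takes the same approach as the paper's proof. Both rewrite each error event as $s_{\leq}(X_{n+1})>\hat q_{\leq}$ (resp.\ $s_{>}(X_{n+1})>\hat q_{>}$) and apply the order-statistic bound under exchangeability conditional on the class label and, for the second bound, on the first-stage event $\hat t_{\alpha_1}(X_{n+1})-\hat q_{\leq}\le\beta$. Your block-conditioning argument, which first freezes $\hat q_{\leq}$ via the $Y\le\beta$ calibration block and then conditions on the pass pattern, supplies the justification the paper compresses into ``exchangeability conditional on $Y_{n+1}>\beta$ and $A_{n+1}$.''
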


The condition $\alpha_i \ge 1/n_i$ ensures that the corresponding quantile is well-defined, i.e., that $\lceil (n_i+1)(1-\alpha_i) \rceil / n_i \le 1$. The proof is provided in Appendix~\ref{sec:ct_proof}.

Given these calibrated corrections, we construct the CT classifier $\Thresh$ as
\[ \Thresh(x) = \begin{cases}
    \text{``Above''} &\text{if\ } \hat t_{\alpha_1}(x) - \hat q_{\leq} > \beta, \\
    \text{``Below''} &\text{else if } \hat t_{1-\alpha_2}(x) + \hat q_{>} < \beta, \\
    \text{``Indeterminate''} &\text{otherwise}.
\end{cases}\]

The intuition behind CT is that the two types of thresholding errors are inherently one-sided. A point with $Y \leq \beta$ can only be misclassified as ``Above'' if the lower bound is too large, while a point with $Y > \beta$ can only be misclassified as ``Below'' if the upper bound is too small. This allows the two error events to be calibrated separately.

To connect this rule to prediction intervals, define the one-sided prediction sets
$$
C_{\leq}(X_i) = \left[\hat t_{\alpha_1}(X_i) - \hat q_{\leq}, \, \infty \right]
$$
and
$$
C_{>}(X_i) = \left[-\infty, \, \hat t_{1-\alpha_2}(X_i) + \hat q_{>} \right].
$$

The set $C_{\leq}$ is calibrated using only calibration samples satisfying $Y_i \le \beta$. Consequently,
$$
\Pr \left(\beta \in C_{\leq}(X_{n+1}) \mid Y_{n+1} \leq \beta \right) \geq 1-\alpha_1.
$$
Intuitively, $C_{\leq}$ acts as a conservative lower bound: if $\beta \notin C_{\leq}(X_i)$, then the entire interval lies above the threshold, providing evidence that $Y_i > \beta$.

Similarly, $C_{>}$ is calibrated using only samples with $Y_i > \beta$ that are not classified as ``Above'', yielding the same false-positive guarantee for the second CT decision step,
$$
\Pr \left(\beta \in C_{>}(X_{n+1}) \mid Y_{n+1} > \beta \right) \geq 1-\alpha_2.
$$
Thus, $C_{>}$ acts as a conservative upper bound: if $\beta \notin C_{>}(X_i)$, then the interval lies entirely below the threshold, suggesting $Y_i \le \beta$.

The CT classifier combines these two one-sided sets. Equivalently, using the construction outlined above, one may consider the two-sided interval
$$
C(X_i)=C_{\leq}(X_i) \cap C_{>}(X_i).
$$
If $C(X_i)$ lies entirely above $\beta$, the point is classified as ``Above''; if it lies entirely below $\beta$, it is classified as ``Below''; otherwise, the interval overlaps the threshold and the prediction is declared ``Indeterminate.''
 
\subsubsection{SAFE decision making}

As discussed above, a major obstacle in eradicating poverty through monetary aid is figuring out which communities are in need of assistance. This information can be obtained through household surveys, but surveying every possible recipient is expensive and diverts resources away from transfers themselves. We operate on neighborhoods rather than individual households for three reasons: EO imagery resolves neighborhood-level conditions rather than single dwellings; the DHS releases cluster-level rather than household-level coordinates; and area-based targeting is an established first stage in practice, with within-community allocation handled by complementary mechanisms. To this end, we extend Conformalized Thresholding into an aid-allocation framework that we call \textit{Screen And Follow up with Error control}, or $\Safe$. The idea is to use our EO-ML model to screen out neighborhoods that are very unlikely to fall below a poverty threshold $\beta$, assign aid directly to those that likely do, and survey the remaining communities for which the model is not sufficiently certain.

In this setting, the key error is the eligible-unit exclusion rate: the share of neighborhoods truly below the poverty line that are mistakenly left without aid. CT allows this risk to be set in advance through the parameter $\alpha_1$. Controlling exclusion, however, is not sufficient to determine the full operational policy. Among neighborhoods that are not screened out, the algorithm must still decide which should receive aid directly and which should be sent to follow-up surveys. Direct aid risks spending resources on ineligible recipients, while surveys consume resources that could otherwise be used for transfers. The parameter $\alpha_2$ governs this trade-off by determining how conservative the rule is before assigning aid without a survey, and $\Safe$ selects the value of $\alpha_2$ that optimizes the resulting allocation policy for a fixed budget $T$.

Formally, consider a deployment set $\mathcal{D} := \{X_i\}_{i=1}^N$ of neighborhoods for which EO imagery is observed but true wealth outcomes are unobserved. The goal is to allocate aid to neighborhoods with $Y_i \le \beta$, where $\beta$ denotes the set poverty line. SAFE takes as input the trained SQR model $\hat{t}_{\tau}$, an exchangeable calibration set $\mathcal{C} := \{(X_j, Y_j)\}_{j=1}^n$, a utility function $\mathcal{U}$ to be maximized, a fixed budget $T$, and an exclusion-risk tolerance $\alpha_1$. It then chooses the CT parameter $\alpha_2$ that maximizes the estimated utility of the allocation policy. The full algorithm is provided in Appendix~\ref{sec:full_safe_algo}, but below we give a high-level intuition for the procedure:

\begin{algorithm}[ht]
\caption{SAFE calibration intuition}
\label{alg:safe_calibration_intuition}
\begin{algorithmic}[1]
\For{candidate values $\alpha_2 \in [0, 1]$}
    \State Calibrate a CT classifier $\Thresh$ on $\mathcal{C}$ using $(\alpha_1, \alpha_2)$
    \State $\hat{u} \gets$ estimated utility $\mathcal{U}$ of the allocation policy induced from $\Thresh$ under budget $T$.
    \If{$\hat{u} > u^*$}
        \State $u^* \gets \hat{u}$
        \State $\Thresh^* \gets \Thresh$
    \EndIf
\EndFor
\State Classify all neighborhoods in $\mathcal{D}$ with $\Thresh^*$
\State Survey all neighborhoods classified as ``Indeterminate''
\State Assign aid to neighborhoods classified as ``Below'' and to surveyed neighborhoods with $Y_i \le \beta$
\end{algorithmic}
\end{algorithm}

We leave the utility $\mathcal{U}$ purposefully vague to leave room for different things. Its value will likely be estimated using $\mathcal{C}$ and/or $\mathcal{D}$. As an example, consider when we wish to maximize the amount of aid per neighborhood with a fixed survey cost $c$ per neighborhood. In this setting, we have
\begin{equation*}%\label{aidPerUnitUtility}%\jb{removing label and equation numbering as it isn't used.}
    \mathcal{U} := \frac{T - cN_{\text{Survey}}}{N_{\text{Aid}}}.
\end{equation*}
This can be estimated from $\mathcal{C}$ and $\mathcal{D}$ with
$$
\begin{aligned}
    \hat{N}_{\text{Survey}} &= \hat{p}_{\text{Survey}} \cdot N, \\ 
    \hat{N}_{\text{Aid}} &= \hat{p}_{\text{Direct}} \cdot N +\hat{p}_{\text{Aid} | \text{Survey}} \cdot N_{\text{Survey}}.
\end{aligned}
$$
With the number of samples $N = |\mathcal{D}|$ and the remaining variables estimated from applying $\Thresh$ on $\mathcal{C}$.

%% [Fable_v3] The CT calibration walkthrough figure was moved to the Method overview in the
%% main text (Adel's instruction); see fig:SAFE_illustration there.

%% [Fable_v2] The full SAFE calibration algorithm appears once, in Appendix~\ref{sec:full_safe_algo}; the duplicated float that previously sat here (same \label used twice in the master) was removed.

\subsubsection{Proof of Theorem~\ref{the:ct}}
\label{sec:ct_proof}

\begin{proof}
    The CT procedure controls the false negative rate at the $\alpha_1$ level because
    \begin{align*}
        \Pr \left( \Thresh(X_{n+1}) = \text{``Above''} \mid Y_{n+1} \le \beta \right) &= \Pr \left( \hat t_{\alpha_1}(X_{n+1}) - \hat q_{\leq} > \beta \mid Y_{n+1} \le \beta \right) \\
        &= \Pr \left( \hat t_{\alpha_1}(X_{n+1}) - \beta > \hat q_{\leq} \mid Y_{n+1} \le \beta \right) \\
        &= \Pr \left( s_{\leq}(X_{n+1}) > \hat q_{\leq} \mid Y_{n+1} \le \beta \right) \\
        &\le \alpha_1,
    \end{align*}
    where the final line follows from exchangeability conditional on $Y_{n+1} \le \beta$ and the fact that $\hat q_{\leq}$ is the $\lceil (n_1+1)(1-\alpha_1) \rceil$-order statistic of~\eqref{eqFirstCalibrationSet}.

    Similarly, CT controls the false positive rate at the $\alpha_2$ level. Let
    \[
        A_{n+1}=\left\{\hat t_{\alpha_1}(X_{n+1}) - \hat q_{\leq} \le \beta\right\}
    \]
    denote the event that a wealthy point is not classified as ``Above'' by the first CT decision step. Then
    \begin{align*}
        \Pr ( \Thresh(&X_{n+1}) = \text{``Below''} \mid Y_{n+1} > \beta ) \\
        &= \Pr \left( \hat t_{\alpha_1}(X_{n+1}) - \hat q_{\leq} \le \beta, \hat t_{1-\alpha_2}(X_{n+1}) + \hat q_{>} < \beta \mid Y_{n+1} > \beta \right) \\
        &= \Pr \left( A_{n+1} \mid Y_{n+1} > \beta \right) \Pr \left( \hat t_{1-\alpha_2}(X_{n+1}) + \hat q_{>} < \beta \mid Y_{n+1} > \beta, A_{n+1} \right) \\
        &\le \Pr \left( \hat t_{1-\alpha_2}(X_{n+1}) + \hat q_{>} < \beta \mid Y_{n+1} > \beta, A_{n+1} \right) \\
        &= \Pr \left( \beta - \hat t_{1-\alpha_2}(X_{n+1}) > \hat q_{>} \mid Y_{n+1} > \beta, A_{n+1} \right) \\
        &= \Pr \left( s_{>}(X_{n+1}) > \hat q_{>} \mid Y_{n+1} > \beta, A_{n+1} \right) \\
        &\le \alpha_2,
    \end{align*}
    where the final line follows from exchangeability conditional on $Y_{n+1} > \beta$ and $A_{n+1}$ and the fact that $\hat q_{>}$ is the $\lceil (n_2+1)(1-\alpha_2) \rceil$-order statistic of~\eqref{eqSecondCalibrationSet}.
\end{proof}

\subsubsection{Full pseudocode for the SAFE calibration algorithm}
\label{sec:full_safe_algo}

\begin{algorithm}[H]
\caption{SAFE calibration}
\label{alg:safe_calibration}
\begin{algorithmic}[1]
\Require Calibration set $\mathcal{C}=\{(x_i,y_i)\}_{i=1}^n$, deployment set $\mathcal{D}=\{x_i\}_{i=1}^N$, threshold $\beta$, fixed FNR target $\alpha_1$, prediction models $\hat{t}_{\alpha}(\cdot)$ for all $\alpha \in [0, 1]$, utility estimator
$\widehat{\mathcal{U}}(\cdot;\mathcal C,\mathcal D)$
\Ensure Calibrated thresholding rule $\Threshstar$

% Phase 1
\State \texttt{- Phase 1: Lower Bound That Ensures FNR -}
\State $\mathcal{C}_{\leq \beta} \gets \{(x_i,y_i)\in\mathcal{C}: y_i \leq \beta\}$ % Only positive samples
\State $n_{\leq \beta} \gets |\mathcal{C}_{\leq \beta}|$

\State $\mathcal{S}_1 \gets \{ \tone(x_i)-\beta \mid (x_i, y_i)\in \mathcal{C}_{\leq \beta} \}$
\State $\hat q_1 \gets \text{Quantile}\left(\mathcal{S}_1;\frac{\lceil(n_{\leq \beta}+1)(1-\alpha_1)\rceil}{n_{\leq \beta}}\right)$

% \State Calculate $\hat{q}_1$ as in Equation~\ref{eq:q_hat_1} using  $\mathcal{C}_{\leq \beta}$ and $\alpha_1$
\State $C_{\alpha_1}^{(l)}(x) \gets [\tone(x) - \hat q_1, \infty]$

% Phase 2
\State \texttt{- Phase 2: Find Upper Bound (FPR) With Highest Utility -}
\State $\mathcal{C}_{>\beta}^{(l)} \gets \{(x_i,y_i)\in\mathcal{C}: y_i > \beta \text{ and } \beta \in C_{\alpha_1}^{(l)}(x_i)\}$
\State $n_{>\beta}^{(l)}  \gets |\mathcal{C}_{>\beta}^{(l)}|$

\State $A^* \gets -\infty$
\For{$\alpha_2 \in \{0, 0.01, \dots, 0.99, 1\}$}
    \State $\mathcal{S}_2 \gets \{ \beta - \ttwo(x_i) \mid (x_i, y_i)\in \mathcal{C}_{>\beta}^{(l)} \}$ %\Comment{\fable{To James: the master wrote $\mathcal{C}_{>\beta,p_1}$ here, a symbol defined nowhere; replaced with the Phase-2 calibration set defined five lines above. Please verify.}}
    \State $\hat q_2 \gets \text{Quantile}\left(\mathcal{S}_2;\frac{\lceil(n_{>\beta}^{(l)}+1)(1-\alpha_2)\rceil}{n_{>\beta}^{(l)}}\right)$
    \State $C_{\alpha_2}^{(r)}(x) \gets [-\infty, \ttwo(x) + \hat q_2]$

    \State $\Thresh(x) \gets \begin{cases}
    \text{No Aid}
    & \beta \notin C_{\alpha_1}^{(l)}(x),\cr
    \text{Give Direct}
    & \beta \notin C_{\alpha_2}^{(r)}(x),\cr
    \text{Survey}
    & \text{otherwise}.
    \end{cases}$

    \State $\hat{A}_{\alpha_2} \gets \widehat{\mathcal{U}}(\Thresh;\mathcal C,\mathcal D)$

    \If{$\hat{A}_{\alpha_2} > A^*$}
        \State $A^* \gets \hat{A}_{\alpha_2}$
        \State $\Threshstar \gets \Thresh$
    \EndIf
\EndFor
\State \Return $\Threshstar$
\end{algorithmic}
\end{algorithm}

\subsection{Evaluation design and splits}
\label{sec:ooa_split}

The images used for training the model have a footprint of $6.72 km \times 6.72 km$ centered on the reported neighborhood coordinate. Given the distribution of these neighborhoods across Africa, there are many common spatial areas between them. To ensure that there are no overlapping areas between points in different folds during a 5-fold cross-validation evaluation, we clustered neighborhoods based on their distances. This approach aims to group all neighborhoods within a cluster together into the same fold, ensuring no overlapping areas between different clusters and folds.

To achieve this, we used DBScan clustering with a minimum number of samples set to 1, allowing the detection of individual points. The minimum distance between two points for inclusion in the same cluster is $9.5 km$, which is the diagonal of the input square ($6.72 \text{ km} \times \sqrt{2} \approx 9.5 \text{ km}$).

Using the DBScan results as initial clusters, we identified some clusters with a large number of members. For example, in Egypt, many neighborhoods along the Nile River were clustered together. To break these large chains, some previous literature conducted visual inspections \citep{Yeh2020, pettersson2023}. To increase the reproducibility, we implemented a systematic procedure in which we applied K-means clustering to further subdivide these large clusters into smaller ones. 

Next, we removed points between the clusters to ensure that the minimum distance between points in different clusters was more than 9.5 km. This process was iterative: after performing K-means clustering, we identified points within each cluster that were less than 9.5 km from points in other clusters. We removed the point with the maximum number of close neighbors, then repeated the process until no points in a cluster were less than 9.5 km from any point in another cluster. As a conclusion, this K-means-based procedure aimed to balance the number of points within clusters while minimizing the number of removed points.

Initially, K-means clustering was applied to clusters with more than 500 points to subdivide them into smaller clusters. Following this, a country-based investigation using Shannon entropy was conducted to identify countries with unbalanced number of points per cluster. The Shannon entropy values from DBScan, the first round of K-means on large clusters, and the second round of K-means based on Shannon entropy are shown below. Finally, 67,829 points remained in the dataset.

\begin{table}[!htbp]
\caption{Shannon entropy of clusters per country}
\label{tab:cluster_entropy}
\begin{center}
\resizebox{\columnwidth}{!}{\begin{tabular}{c|c c c c }
\textbf{Country} & DBScan & First Kmeans & Second Kmeans & Shannon based \\
\hline
\textbf{Angola}                    &  7.21  &  7.21  &   7.21   &  7.21  \\
\textbf{Burkina Faso}              &  6.62  &  6.62  &   6.62   &  6.62  \\
\textbf{Benin}                     &  3.00  &  4.33  &   4.33   &  4.33  \\
\textbf{Burundi}                   &  0.01  &  1.39  &   2.57   &  3.51  \\
\textbf{Congo - Kinshasa}          &  8.39  &  8.41  &   8.43   &  8.43  \\
\textbf{Central African Republic}  &  5.03  &  5.03  &   5.03   &  5.03  \\
\textbf{Côte d’Ivoire}             &  7.26  &  7.26  &   7.26   &  7.26  \\
\textbf{Cameroon}                  &  6.56  &  6.56  &   6.56   &  6.56  \\
\textbf{Egypt}                     &  1.02  &  2.93  &   4.80   &  4.80  \\
\textbf{Ethiopia}                  &  8.13  &  8.13  &   8.13   &  8.13  \\
\textbf{Gabon}                     &  5.92  &  5.92  &   5.92   &  5.92  \\
\textbf{Ghana}                     &  5.37  &  6.02  &   6.02   &  6.02  \\
\textbf{Gambia}                    &  2.01  &  2.55  &   2.55   &  2.70  \\
\textbf{Guinea}                    &  6.61  &  6.61  &   6.61   &  6.61  \\
\textbf{Kenya}                     &  4.05  &  5.41  &   6.01   &  6.01  \\
\textbf{Comoros}                   &  1.45  &  1.45  &   1.45   &  2.10  \\
\textbf{Liberia}                   &  4.21  &  5.02  &   5.02   &  5.03  \\
\textbf{Lesotho}                   &  0.06  &  2.35  &   2.35   &  3.32  \\
\textbf{Morocco}                   &  6.96  &  6.96  &   6.96   &  6.96  \\
\textbf{Madagascar}                &  8.07  &  8.07  &   8.07   &  8.07  \\
\textbf{Mali}                      &  7.77  &  7.77  &   7.77   &  7.77  \\
\textbf{Mauritania}                &  6.58  &  6.58  &   6.58   &  6.58  \\
\textbf{Malawi}                    &  1.49  &  3.35  &   4.07   &  5.00  \\
\textbf{Mozambique}                &  7.56  &  7.63  &   7.64   &  7.66  \\
\textbf{Nigeria}                   &  8.37  &  8.37  &   8.37   &  8.37  \\
\textbf{Niger}                     &  7.14  &  7.14  &   7.14   &  7.14  \\
\textbf{Namibia}                   &  6.42  &  6.42  &   6.42   &  6.42  \\
\textbf{Rwanda}                    &  0.00  &  1.77  &   3.66   &  3.66  \\
\textbf{Sierra Leone}              &  2.44  &  3.87  &   3.87   &  4.54  \\
\textbf{Senegal}                   &  3.59  &  4.77  &   4.77   &  5.21  \\
\textbf{Eswatini}                  &  3.15  &  3.15  &   3.15   &  3.15  \\
\textbf{Chad}                      &  8.16  &  8.16  &   8.16   &  8.16  \\
\textbf{Togo}                      &  4.21  &  4.45  &   4.45   &  4.45  \\
\textbf{Tanzania}                  &  8.03  &  8.04  &   8.04   &  8.06  \\
\textbf{Uganda}                    &  5.06  &  5.45  &   6.21   &  6.21  \\
\textbf{South Africa}              &  7.49  &  7.50  &   7.50   &  7.50  \\
\textbf{Zambia}                    &  8.07  &  8.07  &   8.07   &  8.07  \\
\textbf{Zimbabwe}                  &  7.43  &  7.43  &   7.43   &  7.43  \\

\hline
\end{tabular}}
\end{center}
\end{table}

As an example, for Burundi, the initial DBScan algorithm groups almost all samples into one cluster due to their distance (Supplementary Figure~\ref{fig:burundi_clusters}a). By applying multiple steps of KMeans clustering, the area is divided into more clusters, and the points among the clusters are removed. Supplementary Table~\ref{tab:cluster_entropy} shows how this process affects the results.

\begin{figure}[!htbp]
     \centering
     \begin{subfigure}[hbt!]{0.5\textwidth}
         \centering
         \includegraphics[width=\textwidth]{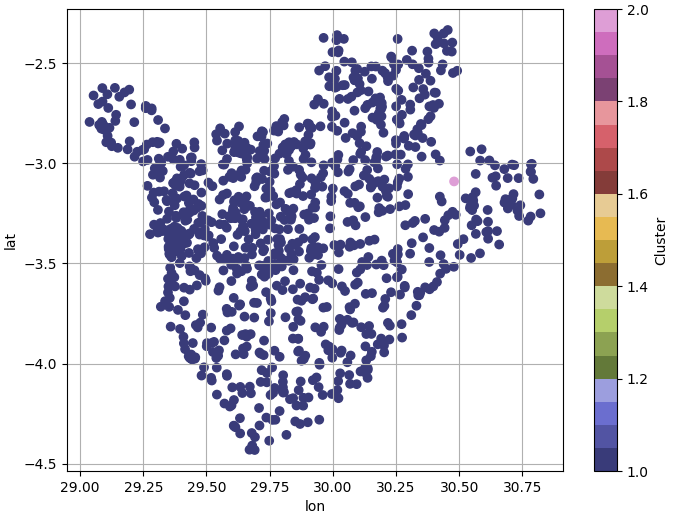}
          \caption{ }
     \end{subfigure}%
     \begin{subfigure}[hbt!]{0.5\textwidth}
         \centering
         \includegraphics[width=\textwidth]{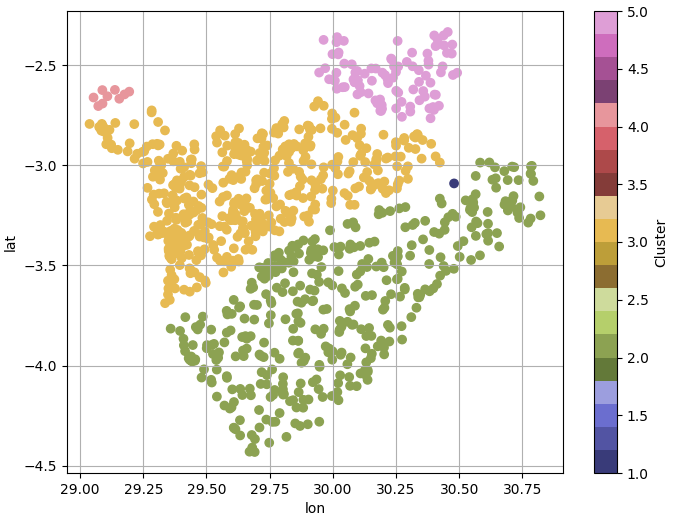}
         \caption{ }
     \end{subfigure}
     \begin{subfigure}[hbt!]{0.5\textwidth}
         \centering
         \includegraphics[width=\textwidth]{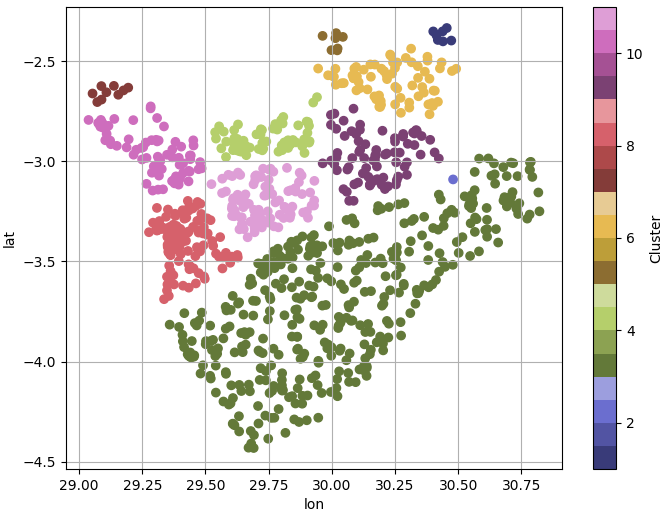}
          \caption{ }
     \end{subfigure}%
     \begin{subfigure}[hbt!]{0.5\textwidth}
         \centering
         \includegraphics[width=\textwidth]{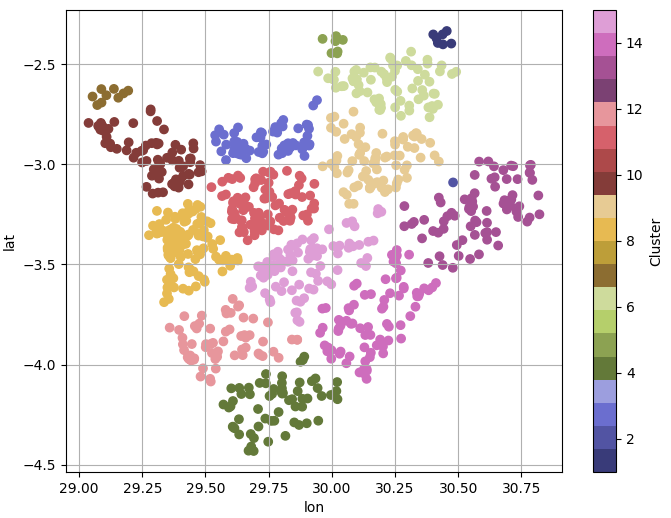}
         \caption{ }
     \end{subfigure}%
     \caption{ Out of area clusters in Burundi. (a) Result from DBScan. (b) Applying the first round of KMeans. (c) Applying the Second round of KMeans. (d) Applying KMeans to countries with low Shannon entropy}
     \label{fig:burundi_clusters}

\end{figure}

To generate the folds, we aimed to balance them according to country clusters, which would naturally lead to balanced folds. We started with the country containing the largest number of points and proceeded to the country with the fewest points. For each country, we began with the largest cluster, assigning it to the fold with the fewest points from that country. This procedure not only balanced the number of points per country in each fold but also ensured an overall balance in the number of points across all folds.  The final number of samples in the five folds are 14,128, 13,726, 13,576, 13,357, and 13,042. Since all points within a cluster were assigned to the same fold, there is some variation in the number of points between folds.

All the results are evaluated with five-fold cross-validation, where the non-training held-out fold is further partitioned into calibration and test subsets for conformal prediction.

\begin{table}[!htbp]
\centering
\caption{Countries assigned to each cross-validation fold during out-of-country training.}
\label{tab:fold_countries}
\begin{tabular}{lp{11cm}}
\toprule
\textbf{Fold} & \textbf{Countries} \\
\midrule
A & Cameroon, Chad, Democratic Republic of the Congo, Egypt, Eswatini, Guinea, Mauritania \\
B & Comoros, Gabon, Namibia, Niger, Nigeria, Rwanda, Uganda, Zambia \\
C & Angola, Côte d'Ivoire, Ethiopia, Kenya, Liberia, Mali, Togo \\
D & Burundi, Gambia, Ghana, Madagascar, Morocco, Sierra Leone, Tanzania, Zimbabwe \\
E & Benin, Burkina Faso, Central African Republic, Lesotho, Malawi, Mozambique, Senegal, South Africa \\
\bottomrule
\end{tabular}
\end{table}

\section{Previous work}
\label{sec:previous_works}

Comparing results across studies remains difficult due to differing datasets, covariates and assumptions.

As a complement, we have trained several baseline architectures on the same dataset and splits, to show that our SQR model's point accuracy is representative of what standard architectures achieve on these data rather than an artifact of model choice (Supplementary Table~\ref{tab:previous_works}).

\begin{table}[!htbp]
\centering
\begin{tabular}{llll}
\textbf{Paper}             & \textbf{Model type} & $R^2$ & \textbf{Covariates} \\ \hline
\citet{pettersson2023}             & CNN + LSTM          & 0.76           & Landsat + NL        \\
\citet{marty2024global} & XGBoost             & 0.75           & EO + other sources  \\
\citet{Wang2024}                   & RF                  & 0.70           & NL                  \\
\citet{zheng2025dynamic}           & SwinV2-T            & 0.69           & Landsat             \\ 
\citet{Yeh2020}                    & CNN                 & 0.67           & Landsat + NL        \\
\citet{sherman2026global}          & MOSAIKS             & 0.67           & MOSAIKS             \\
\citet{Jean2016}                   & CNN                 & 0.56           & Google Maps imagery         \\
\citet{chi_microestimates_2022}  & Gradient Boosting   & 0.56           & EO + other sources  \\
\hline
\hline
\multirow{3}{*}{Other baselines}                       & Small-ViT           & 0.69           & Landsat + NL        \\
                       & ResNet-50           & 0.67           & Landsat + NL        \\
                       & ResNet-18           & 0.65           & Landsat + NL        \\
\hline
\hline
Our EO-SQR model                       & SQR-transformer     & 0.75           & Landsat + NL        \\ 
\hline
\end{tabular}
\caption{\textbf{Previous works} Performance reported in earlier works trained to predict asset wealth from DHS data. Direct comparisons between different works remains difficult, but nevertheless, our proposed EO-SQR model appears competitive with reported figures.}
\label{tab:previous_works}
\end{table}

\section{Map creation}
\label{sec:map_creation}

To ensure privacy and conserve computational resource, we only generate map predictions for raster grid cells with at least 20 inhabitants according to \citet{worldpop_r2025a}. For the 2021 target year, each of the five outer-fold models predicts the 5th, 50th, and 95th conditional IWI quantiles at every retained grid center. The point predictions presented in the left panel of Figure~\ref{fig:iwi_and_ci_maps} display the average of the five median predictions.

In accordance with \citet{gasparin2024merging}, we aim to merge the five prediction intervals by majority voting. Assuming that the five intervals all overlap, we aggregate these intervals by taking the median lower endpoint and the median upper endpoint across folds. The right panel in Figure~\ref{fig:iwi_and_ci_maps} displays the distance between these aggregated endpoints.

These maps are useful for visualization, but the coverage across the raster has not been established. Each conformal correction applies to a new observation exchangeable with one fold's calibration sample. This will not be the case for historical displaced DHS clusters and the 2021 continental grid. We therefore describe the displayed map and its interval widths as descriptive.

The checked-in grid centers are spaced by approximately 0.05875 degrees, about 6.5 km north--south, with east--west distance varying by latitude. This spacing is distinct from the $6.72\times6.72$ km Landsat input footprint. A separate population-enrichment helper sums the WorldPop Global 2015--2030 R2025A constrained 100 m product within 6.27 km squares. These three spatial quantities should not be used interchangeably.

%TC:endignore

\end{document}